%%%%%%%% ICML 2022 EXAMPLE LATEX SUBMISSION FILE %%%%%%%%%%%%%%%%%

%\documentclass[nohyperref,twocolumn]{article}
\documentclass[nohyperref]{article}
% Recommended, but optional, packages for figures and better typesetting:
\usepackage{microtype}
\usepackage{graphicx}
\usepackage{subfigure}
\usepackage{booktabs} % for professional tables

% hyperref makes hyperlinks in the resulting PDF.
% If your build breaks (sometimes temporarily if a hyperlink spans a page)
% please comment out the following usepackage line and replace
% \usepackage{icml2022} with \usepackage[nohyperref]{icml2022} above.
\usepackage{hyperref}

% Attempt to make hyperref and algorithmic work together better:

% Use the following line for the initial blind version submitted for review:
\usepackage[accepted]{icml2022_TAGML}

% If accepted, instead use the following line for the camera-ready submission:
% \usepackage[accepted]{icml2022}

% For theorems and such
\usepackage{amsmath}
\usepackage{amssymb}
\usepackage{mathtools}
\usepackage{amsthm}
\usepackage{multicol}

\newtheorem*{definition*}{Definition}
\newtheorem*{proposition*}{Proposition}

\theoremstyle{plain}

% if you use cleveref..
\usepackage[capitalize,noabbrev]{cleveref}

%%%%%%%%%%%%%%%%%%%%%%%%%%%%%%%%
% THEOREMS
%%%%%%%%%%%%%%%%%%%%%%%%%%%%%%%%
\theoremstyle{plain}
\newtheorem{theorem}{Theorem}[section]
\newtheorem{proposition}[theorem]{Proposition}

\newtheorem{nono-prop}{Proposition}[]

\theoremstyle{definition}
\newtheorem{definition}[theorem]{Definition}

\theoremstyle{remark}

% Todonotes is useful during development; simply uncomment the next line
%    and comment out the line below the next line to turn off comments
%\usepackage[disable,textsize=tiny]{todonotes}
\usepackage[textsize=tiny]{todonotes}

% The \icmltitle you define below is probably too long as a header.
% Therefore, a short form for the running title is supplied here:
%\icmltitlerunning{Submission and Formatting Instructions for ICML 2022}

\begin{document}

\twocolumn[
\icmltitle{Invariance-adapted decomposition and Lasso-type contrastive learning}

% It is OKAY to include author information, even for blind
% submissions: the style file will automatically remove it for you
% unless you've provided the [accepted] option to the icml2022
% package.

% List of affiliations: The first argument should be a (short)
% identifier you will use later to specify author affiliations
% Academic affiliations should list Department, University, City, Region, Country
% Industry affiliations should list Company, City, Region, Country

% You can specify symbols, otherwise they are numbered in order.
% Ideally, you should not use this facility. Affiliations will be numbered
% in order of appearance and this is the preferred way.
%\icmlsetsymbol{equal}{*}

\begin{icmlauthorlist}
\icmlauthor{Masanori Koyama}{comp}
\icmlauthor{Takeru Miyato}{comp}
\icmlauthor{Kenji Fukumizu}{sch,comp}
%\icmlauthor{Firstname4 Lastname4}{sch}
%\icmlauthor{Firstname5 Lastname5}{yyy}
%\icmlauthor{Firstname6 Lastname6}{sch,yyy,comp}
%\icmlauthor{Firstname7 Lastname7}{comp}
%\icmlauthor{}{sch}
%\icmlauthor{Firstname8 Lastname8}{sch}
%\icmlauthor{Firstname8 Lastname8}{yyy,comp}
%\icmlauthor{}{sch}
%\icmlauthor{}{sch}
\end{icmlauthorlist}

%\icmlaffiliation{yyy}{Department of XXX, University of YYY, Location, Country}
\icmlaffiliation{comp}{Preferred Networks, Tokyo, Japan}
\icmlaffiliation{sch}{The Institute of Statistical Mathematics, Tokyo, Japan}
\icmlcorrespondingauthor{Masanori Koyama}{masomatics@preferred.jp}
%\icmlcorrespondingauthor{Masanori Koyama}{masomatics@preferred.jp}

% You may provide any keywords that you
% find helpful for describing your paper; these are used to populate
% the "keywords" metadata in the PDF but will not be shown in the document
\icmlkeywords{Machine Learning, ICML}

\vskip 0.3in
]

% this must go after the closing bracket ] following \twocolumn[ ...

% This command actually creates the footnote in the first column
% listing the affiliations and the copyright notice.
% The command takes one argument, which is text to display at the start of the footnote.
% The \icmlEqualContribution command is standard text for equal contribution.
% Remove it (just {}) if you do not need this facility.

%\printAffiliationsAndNotice{}  % leave blank if no need to mention equal contribution
\printAffiliationsAndNotice{\icmlEqualContribution} % otherwise use the standard text.

\begin{abstract}
Recent years have witnessed the effectiveness of contrastive learning in obtaining the representation of dataset that is useful in interpretation and downstream  tasks.
However, the mechanism that describes this effectiveness have not been thoroughly analyzed, 
and many studies have been conducted to investigate the data structures captured by contrastive learning.
%However, the mechanism by which the contrastive learning succeeds in this feat has not been fully uncovered.  
%In this paper, we show that, just by changing the metric, we can decompose the data-space with the contrastive learning of simCLR-type loss alone.
In particular, the recent study of \citet{content_isolate} has shown that contrastive learning is capable of decomposing the data space into the space that is invariant to all augmentations and its complement. 
In this paper, we introduce the notion of invariance-adapted latent space that decomposes the data space into the intersections of the invariant spaces of each augmentation and their complements. This decomposition generalizes the one introduced in \citet{content_isolate}, and describes a structure that is analogous to the frequencies in the harmonic analysis of a group. 
We experimentally show that contrastive learning with lasso-type metric can be used to find an invariance-adapted latent space, thereby suggesting a new potential for the contrastive learning. 
We also investigate when such a latent space can be identified up to mixings within each component.

\end{abstract}

\section{Introduction}

% ICLR2022-OSC requires electronic submissions, processed by
% \url{https://openreview.net/}. See workshop website for more instructions.

% If your paper is ultimately accepted, the statement {\tt
%   {\textbackslash}iclrfinalcopy} should be inserted to adjust the
% format to the camera ready requirements.

% The format for the submissions is a variant of the NeurIPS format.
% Please read carefully the instructions below, and follow them
% faithfully.

Collectively, \textrm{contrastive learning} refers to a family of representation-learning methods with a mechanism to construct a latent representation in which the members of any \textit{positive pair} with similar semantic information are close to each other while the members of \textit{negative pairs} with different semantic information are far apart \citep{hjelm2018learning, bachman2019amdim, henaff2020cpc, tian2019cmc, chen2020simclr}. 
Recently, numerous variations of contrastive learning \citep{clip, pcl, cont_seg, owod, curl}
 have appeared in literature, providing evidences in support of the contrastive approach in real world applications. 
However, there still seems to be much room left for the investigation of the reason why the contrastive learning is effective in the domains like image-processing.

Recent works in this direction of research include those pertaining to information theoretic interpretation \citep{wang2020uniform} as well as the mechanism by which the contrastive objective uncovers the data generating process and the underlying structure of the dataset in a systematic way \citep{invert}. 
In particular, \citet{content_isolate} have
shown that, under a moderate transitivity assumptions with respect to the actions of the augmentations used in training, the contrastive learning can 
%decompose the space into
%\textrm{the space that is fixed by all augmentation transformations} and \textrm{the space that is altered by some augmentation}, while interpreting the former as the \textit{content} space and the latter as the \textit{style} space. 
isolate the \textit{content} from the \textit{style}---where the former is defined as the space that is fixed by all augmentations and the latter is defined as the space altered by some augmentation. 

Meanwhile, \citet{orbit_decomp} described the similar philosophy in terms of group theoretic context, claiming that the contrastive learning can decompose the space into inter-orbital direction and intra-orbital direction. 
% \cite{orbit_decomp} 
They even went further to decompose the inter-orbital direction by introducing an auxiliary IRM-type loss.
In a related note, \citet{product_man} 
assume that the dataset of interest has the structure of a product manifold with the assumption that each augmentation family alters only one of the products. 
They train a set of nonlinear projection operators to extract each component of the manifold in a self-supervised way.
When the actions applied to the dataset constitute a group, the ultimate study of inter-orbital direction is the study of group actions and equivariance, because  each orbit has the structure of the group modulo the stabilizer. Several works in past have proposed methods to learn this structure \cite{imagerot, Dangovski, colorful, Zigsaw}, and shown that the knowledage of inter-orbital direction plays an important role in the performance of downstream tasks as well.

In this research, we present a result that suggests a connection between contrastive learning and decomposition that generalizes the decomposition discussed in \citet{content_isolate}.  
In particular, if $\mathcal{T} = \{T_i\}$ is the set of augmentations to be used in the training, we empirically show that a standard contrastive learning with Lasso type distance can be used to decompose the data space into the intersections of 
the invariance spaces of $\mathcal{T}$ and their complements. 
We say that the latent space is invariance-adapted to $\mathcal{T}$ if any invariance space can be represented by a set of coordinates. 
As we will describe in more detail,  this decomposition generalizes the decomposition studied in \citet{content_isolate} and has a connection to harmonic analysis of group.
Moreover, we will show that, in a special case, this decomposition of the space can be block-identified without assuming strict group structure on the set of augmentations. 

\section{Decomposition of invariant spaces with $L_1$ contrastive learning}

\subsection{Invariance-adapted latent space}
We first describe the nature of the invariance-adapted latent space.
Let $\mathcal{X}$ be the space of dataset, and let $\mathcal{T} = \{T_i: \mathcal{X} \to \mathcal{X} \}$ be a set of augmentations. 
% In their work, \cite{content_isolate} showed that contrastive learning can decompose $\mathcal{X}$ into the space $V_{content}$ that is invariant under the action of all members of $\mathcal{T}$ and its complement $V_{style}$. 
% % \cite{content_isolate} 
% They also proved that this decomposition 
% %of the space 
% is unique up to an invertible transformation within each component (block identifiability). 
% In this paper, however, we would like to 
% %consider the decomposition of the space into finer components.
% see if contrastive learning can decompose the space into finer components. 
% For the sake of intuition, see Figure \ref{fig:symmetry_decomp}. 
\begin{figure}[t!]
    \centering
    \includegraphics[scale=0.3]{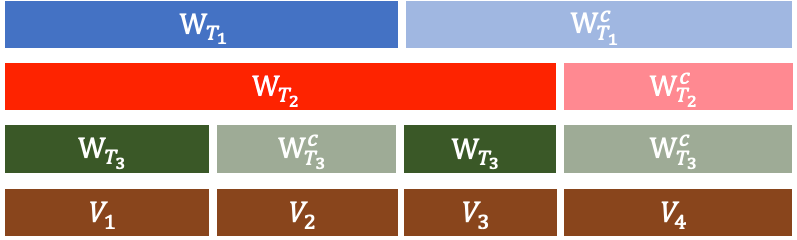}
    \caption{First three rows: visualization of the decomposition of the latent space of $\mathcal{X}$ induced by three different augmentations, $T_1, T_2$ and $T_3$. The space $W_T$ is the invariant space of $T$.  The bottom row: the decomposition of  $\mathcal{X}$ into frequency components $\{V_k\}$ that can generate any intersections of $W_T$s.}
    \label{fig:symmetry_decomp}
\end{figure}
Suppose that $h: \mathcal{X} \to \mathcal{Z}$ is an invertible map from $\mathcal{X}$ to some Euclidean latent space $\mathcal{Z}=\mathbb{R}^n$.
For each augmentation $T\in \mathcal{T}$, we may define the latent $T$-invariant space $W_T \subset \mathbb{R}^n$ to be the set of $w$ for which $T \circ h^{-1}(w) = h^{-1} (w)$\footnote{Note that $T$ fixes any element in $W_T$.  This is different from the popular definition of ``invariant space'' in linear algebra, which requires only $h\circ T \circ h^{-1}( W_T)\subset W_T$ as \textit{a linear subspace}. }. 
% We use $W_T \subset h(\mathcal{X})$ to denote the latent invariant space of a specific augmentation $T$ i.e., $T \circ h^{-1}(w) = h^{-1} (w)$ for any $w\in W_T$.
We would like to consider a decomposition of the space into the intersections of $W_T$s and their complements $W_T^c$s.
The first three rows in Figure \ref{fig:symmetry_decomp} are the visualization of the decomposition of $\mathcal{X}$ into invariant/non-invariant space for three different choices of augmentations, $T_1$, $T_2$ and $T_3$.
When there are three decompositions as such,
%we can also think of 
we can consider the decomposition of the latent space into $\{V_k\}$ defined as the minimal intersections of invariant spaces and their complements (fourth row), For example, in Figure \ref{fig:symmetry_decomp}, $V_1 = W_{T_1} \cap W_{T_2} \cap W_{T_3}$, $V_2 = W_{T_1} \cap W_{T_3}^c$,  and so on.
%$V_3 = W_{T_2} \cap W_{T_1}^c$, and $V_4=W_{T_2}^c$. 
%Notice that, $V_1$ in this figure is the intersection of all three invariant spaces, $W_{T_1}, W_{T_2}$ and $W_{T_3}$.

The most important property of $\{V_k\}$ is that, for any arbitrary subset of $\mathcal{T}$, the family $\{V_k\}$ can generate the common invariant space for the subset as well as its complementary space. 
%the intersection of the invariant spaces 
%as well as its complement.
In this work, we would like to consider $h$ such that each $V_k$ can be represented by a set of its coordinates. In other words,
we want $h$ such that, for each $T \in \mathcal{T},$ there is a unique set $A_T$ of coordinates $w_1,\ldots,w_n$ such that
\begin{align}
h_i [T h^{-1}(w) ] = w_i ~~~ \textrm{for all $i \in A_T$,}
\end{align}
where $h_i$ is the $i$-th component of $h$.

The concept we are introducing here is akin to the one used in harmonic analysis of groups \citep{Weintraub, Garsia, clausen}.
To see this connection, consider the discrete Fourier transform $h_{DFT}: \mathbb{C}^4 \to \mathbb{C}^4$, 
%By definition, $h_{DFT}$ 
which is the change of basis transformation from the standard basis to the Fourier basis: 
%consisting of
$$[1, 1, 1, 1],  ~~ [1, i, -1, -i],  ~~[1, -1, 1, -1], ~~[1, -i, -1, i] $$ 
or $v_k = [1, i^k, i^{2k}, i^{3k} ]$ for $k\in 0:3$.
This basis is special because each member is an eigen-vector of the shift action $\tau([a_1, a_2, a_3, a_4]) = ([a_2, a_3, a_4, a_1])$ and their powers $\tau^2, \tau^3$ and $\tau^0=id$. 
Thus, $h_{DFT}$ is equivalent to the decomposition of $\mathbb{C}^4$ into eigen-spaces of the shift actions.
The left frame in Table\ref{tab:spectrum} summarizes the eigen-spectrum of the Fourier basis. 
From this table we can see that $\tau^2$-invariant space in our sense is given by the span of $\{v_0, v_2\}$, because
%$\tau^2(v_3) =\tau^2([1, -i, -1, i]) = [-1, i , 1, -i]= - v_3.$ 
$\tau^2(v_1) =\tau^2([1, i, -1, -i]) = [-1, -i , 1, i]= - v_1$, and $\tau^2(v_3)=-v_3$ while $\tau^2(v_2)=v_2$ and $\tau^2(v_0) = v_0$.  
Note that each $v_k$ in the Fourier basis is uniquely characterized by how it ``responds'' to $\mathcal{T}_{\textrm{shift}} = \{ \tau^\ell ; \ell=0:3\}$ because each row in the spectrum table is diferent from each other. 
In the harmonic analysis of groups, each $v_k$ is often called {\em frequency}. 
\begin{table}[]
    \centering
    \begin{tabular}{c | c | c  | c  | c}
          & $\tau^0$ & $\tau$  & $\tau^2$ & $\tau^3$  \\
    \hline
    $v_0$   & $1$ & $1$   & $1$ & $1$ \\ 
    $v_1$   & $1$ & $i$   & $-1$ & $-i$  \\
    $v_2$   & $1$ & $-1$  & $1$  & $-1$  \\
    $v_3$   & $1$ & $-i$  & $-1$ & $i $
    \end{tabular} \hspace*{2mm}
    \begin{tabular}{c | c | c  | c }
          & $T_1$ & $T_2$  & $T_3$   \\
    \hline
    $V_1$   & $1$ & $1$   & $1$  \\ 
    $V_2$   & $1$ & $1$   & $0$  \\
    $V_3$   & $0$ & $1$  & $1$    \\
    $V_4$   & $0$ & $0$  & $0$ 
    \end{tabular}
    \caption{Left : eigen-spectrum of Fourier basis for DFT(3). 
    Right: invariance spectrum of $\{T_1, T_2, T_3\}$ in Figure \ref{fig:symmetry_decomp} }
    \label{tab:spectrum}
\end{table}

Likewise, each $V_k$ in the decomposition of Figure \ref{fig:symmetry_decomp} is uniquely characterized by how it responds to each member of  $\mathcal{T}$. In the right frame in Table\ref{tab:spectrum}, the entry corresponding to $V_k$ and $T_\ell$ is $1$ if $T_\ell$ fixes $V_k$ and $0$ otherwise.  The binary entries in the right frame of \ref{tab:spectrum} is analogous to the eigen-values in the left frame, and each $V_k$ plays a similar role as a {\em frequency} in the DFT we desribed above.
We would like to formalize this idea below.
%Therefore, we would like to call the smallest intersection components (e.g. $V_k$s) that make up the invariant space of each $T$ as a \textit{invariance-frequency}. 

\begin{definition}
A latent space $h(\mathcal{X})$ for an invertible $h$ is said to be \textit{invariance-adapted to $\mathcal{T}$} 
if for each $T\in\mathcal{T}$ there is a subset of coordinates $A_T\subset\{1\ldots,n\}$ such that
\begin{equation}
%h_i (T x) = h_i(x) ~~~ \textrm{for all $i \in A_T$ and $x\in\mathcal{X}$},
h_i (T h^{-1}(w)) = w_i \textrm{~~for all $i \in A_T$ and $w\in\mathcal{Z}$}
\end{equation}
and that, for $j\notin A_T$,  there is some $w'\in\mathcal{Z}$ that satisfies $h_j (T h^{-1}(w')) \neq w'_j$. 
\end{definition}
In other words, when the latent space is invariance-adapted, each $T$-invariant space $W_T$ is defined by $A_T$, and $W_T^c$ is defined by $A_T^c$. 
%if it can express the invariant space of every $T\in\mathcal{T}$ as a "subset of coordinates".
Also, in an  invariance-adapted latent space, any intersection $\cap_j S_j$ with $S_j\in\{W_T,W_T^c\mid T\in\mathcal{T}\}$ can be expressed by a subset of coordinates.  Therefore, the family of the minimal intersections of $\{A_T, A_T^c\mid T\in\mathcal{T}\}$\footnote{Since we consider only subspaces spanned by subsets of coordinates, we use coordinate indices $\{A_T, A_T^c\mid T\in\mathcal{T}\}$ and spanned subspaces $\{W_T,W_T^c\mid T\in\mathcal{T}\}$ interchangeably.} defines a decomposition of the latent coordinates  $(w_1,\ldots,w_n)$ 
%for the latent space 
so that any $T$-invariant subspace $W_T$ can be expressed as a union of the minimal intersections.  We call each member of the set of minimal intersections of $\{A_T, A_T^c\mid T\in\mathcal{T}\}$ an {\em invariance-frequency} for the representation $h$, naming it by the analogy to the harmonic analysis.
%\kf{for any pair of $(T, T')$, it can express $A\cap B$ where $A$ and $B$ are 
%$A=W_T/W_T^c$ and $B=W_{T'}/W_{T'}^c$  as a subset of coordinates as well}. 

For example, the coordinate system of $\mathbb{C}^4$ under the Fourier basis in the discussion above is invariance-adapted, where $h$ is the DFT and the three invariance-frequencies are spanned by $\{v_0\},\{v_2\}$, and $\{v_1,v_3\}$ respectively. 
The invariant space of $\tau^2$ is spanned by $\{v_0, v_2\}$, and the invariant space of $\tau$ and $\tau^3$ is spanned by $\{v_0\}$ only. 

Note also that, instead of the $T$-invariant space for an individual $T\in\mathcal{T}$,  we can also consider $\mathcal{T}_a$-invariant space for a subset $\mathcal{T}_a\subset \mathcal{T}$, which is the common space that is invariant to any $T\in \mathcal{T}_a$, and discuss $\{V_j\}$ for a family of subsets $\{\mathcal{T}_a\}_a$.  In the later experiment of image augmentations, $\mathcal{T}$ is composed of the family of rotations $\mathcal{T}_{rot}$ and the family of color-jitterings $\mathcal{T}_{color}$. We would consider the respective invariant space for $\mathcal{T}_{rot}$ and $\mathcal{T}_{color}$, and discuss the decomposition of the latent space according to those three invariant spaces.

%Notice that when each augmentation $T$ has an invariant space $W_T$ of its own, the only choices of the coordinate system that can express the invariant space of every $T$ as a "subset of coordinates" is the family of coordinates that can not only express $W_T$ as a subset but also $W_T \cap W_{T'}$ as a subset for all $(T, T')$. 

%if the intersection of the invariant spaces for any pair $(T, T') \subset \mathcal{T}$ can be represented by a set of coordinates. 

\subsection{Sparse method for invariance-adapted representation}

We want to experimentally show that we can use a simple contrastive learning objective to find an invariance-adapted latent space for $\mathcal{X}$. 
As shown in \citet{wang2020uniform}, the contrastive learning based on noise contrastive error (e.g simCLR) can be described as a combination of two losses: (1) the alignment loss that attracts the positive pairs in the latent space and (2) the uniformity loss that encourages the latent variables to be distributed uniformly, thereby preventing the degeneration. 
We use the loss of the same type, except that we replace the cosine distance norm in simCLR with the $L_1$ loss. We thus consider the following objective to train the encoder $h_\theta$ parametrized by $\theta$: 
% \begin{align}
% \begin{split}
%     &L_{align} + L _{anti-deg}  \\
%     &= \mathbb{E}_{T_1,T_2,X} [\|h_\theta(T_1(X)) - h_\theta (T_2(X)) \|_1/\tau]+  (- H(h_\theta)),
%     \label{eq:L1loss} 
%     \end{split}
% \end{align}
\begin{align}
\begin{split}
    &L_{\text{align}} + L _{\text{anti-deg}}  \\
    &= \mathbb{E}_{T,X} [\|h_\theta(T(X)) - h_\theta (X)) \|_1/\tau]+  (- H(h_\theta (X))),
    \label{eq:L1loss} 
    \end{split}
\end{align}
where $X$ is the observation input, $T$ is the random augmentations, $\tau$ is the temperature, and $H$ is a function that prevents degeneration, such as Shannon's entropy.
Essentially, this objective function differs from simCLR only in the choice of the metric used to bring the positive pairs together; more precisely, \eqref{eq:L1loss} becomes simCLR when we replace $\|a-b\|_1$ with $a^Tb$ and use a single instance of $T$ instead of a pair of augmentations $T_1$ and $T_2$.
Notice that $L_1$ distance is particularly different from the angular distance in that it is \textrm{not} invariant to the orthogonal transformation, and hence is able to align particular set of dimensions.
% \begin{algorithm}[ht!]
%   \begin{algorithmic}[1]
%     \renewcommand{\algorithmicensure}{\textbf{Input:}}  
%     \REQUIRE A batch of samples $\mathcal{X} = \{x_k\}$, an encoder model $h_\theta : x \to z$
%      \FOR {each iteration $i$}
%         \STATE For each $x_k$, apply random transformations $T, T'$ to obtain the set of $(T(x_k), T'(x_k))$ 
%         \STATE Compute the $L_1$ alignment loss $L_{align}(h_\theta, \mathcal{X})$ and the anti-degeneration loss $L_{anti-g}(h_\theta, \mathcal{X})$ in the \eqref{eq:L1loss}
%         \STATE Update $h_\theta$ by $L_{align}(h_\theta, , \mathcal{X})$ + $L_{anti-g} (h_\theta, \mathcal{X})$
%      \ENDFOR
%   \end{algorithmic} 
%   \caption{$L_1$ Contrastive Learning}  \label{alg:algorithm}
% \end{algorithm}  
The gist of this loss is to maximize the number of dimensions at which the pair $(T(X),X)$ agrees in the latent space in the way of LASSO \citep{tib_lasso}.  
% And of course, these matching dimensions have a special meaning; if $T(X)$ and $T'(X)$ agree on the set of dimensions $S$, then $S$ represents the space that is invariant with respect to both $T$ and $T'$, or $W_{T} \cap W_{T'}$. 
% \mf{Because the set of latent dimensions on which $T(X)$ and $T'(X)$ is a subset of $W_{T} \cap W_{T'}$, we can expect our loss to seek a representation in which $W_{T} \cap W_{T'}$ as well as $W_T$ and $W_{T'}$ is maximized. }
% That is, we can expect our loss to seek a representation in which any $W_{T} \cap W_{T'}$ for any pair $(T, T')$ can be described by a discrete set of coordinates. 
If $\mathcal{T}_1 \subset \mathcal{T}$ has a common invariant space $W_{\mathcal{T}_1}$ and $\mathcal{T}_2 \subset \mathcal{T}$ has $W_{\mathcal{T}_2}$, and if $\mathcal{T}_1$and $\mathcal{T}_2$ contains the identity, we can expect our loss to seek a representation in which $W_{\mathcal{T}_1} \cap W_{\mathcal{T}_2}$ as well as $W_{\mathcal{T}_2}$ and $W_{\mathcal{T}_2}$ are maximized. 

Seeing $L_1$ as an approximation of $L_0$ and noting that a linear invertible map is equivalent to a change of basis, the following proposition provides a partial justification to the objective function \eqref{eq:L1loss} when $\mathcal{T}$ satisfies certain condition:
 \begin{proposition}
Suppose that the augmentations are linear transformations on linear space, and each $T\in \mathcal{T}$ does not mix a member of $W_T^c$ into $W_T$. Let $[v]_{P}$ denote the representation of %a singal 
$v$ under the basis $P$.
 If each $W_T$ is a subset of coordinates under some $P^*$,
 then $P^*$ can be identified upto mixing within each invariance-frequency by minimizing
 $$L(P) = \sum_k \max_v \| [T_k v - v]_P \|_0.$$
 \vspace{-0.5cm}
 \label{thm:l0}
 \end{proposition}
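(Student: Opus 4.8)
The plan is to reduce the loss to a statement about the subspaces $W_{T_k}^c$, prove a linear-algebra lemma bounding the minimal $\ell_0$-support of a subspace, and then read off both the optimality of $P^*$ and the identifiability claim from that lemma. First I would unpack the hypothesis ``$T$ does not mix a member of $W_T^c$ into $W_T$'' as $T(W_T^c)\subseteq W_T^c$, so that relative to $V=W_T\oplus W_T^c$ the map $T$ is the identity on $W_T$ and an automorphism of $W_T^c$ with no nonzero fixed vector (a fixed vector in $W_T^c$ would lie in $W_T\cap W_T^c=\{0\}$). Then $(T-I)$ annihilates $W_T$ and is invertible on $W_T^c$, so $\mathrm{Im}(T-I)=W_T^c$; hence $\max_v\|[T_kv-v]_P\|_0=\max_{r\in W_{T_k}^c}\|[r]_P\|_0$, and writing $f(U,P):=\max_{r\in U}\|[r]_P\|_0$ the loss becomes $L(P)=\sum_k f(W_{T_k}^c,P)$.

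Next I would establish the key lemma: for any subspace $U\subseteq\mathbb{R}^n$ and any basis $P$, $f(U,P)\ge\dim U$, with equality if and only if $U$ is spanned by a subset of $P$ (a coordinate subspace for $P$). Changing coordinates by $P$, assume $P$ is standard and set $\mathrm{supp}(U)=\{\,j:\exists u\in U,\ u_j\neq0\,\}$; then $U\subseteq\mathrm{span}\{e_j:j\in\mathrm{supp}(U)\}$, so $|\mathrm{supp}(U)|\ge\dim U$. Since each $\{u\in U:u_j=0\}$ with $j\in\mathrm{supp}(U)$ is a proper subspace of $U$, and $U$ (over $\mathbb{R}$) is not a finite union of proper subspaces, some $u\in U$ has $\|u\|_0=|\mathrm{supp}(U)|\ge\dim U$; equality forces $|\mathrm{supp}(U)|=\dim U$ and hence $U=\mathrm{span}\{e_j:j\in\mathrm{supp}(U)\}$, while the converse is immediate.

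Then I would finish as follows. Under $P^*$ each $W_{T_k}^c$ is the coordinate subspace indexed by $A_{T_k}^c$, so the lemma gives $L(P^*)=\sum_k\dim W_{T_k}^c$ and shows this is the global minimum, attained by a basis $P$ exactly when every $W_{T_k}^c$ is a coordinate subspace for $P$. For identifiability I would pass to the linear change of coordinates $Q$ taking $P^*$-coordinates to $P$-coordinates, so that the $P$-description of any subspace is the $Q$-image of its $P^*$-description. For each $k$, $Q$ maps the coordinate subspace indexed by $A_{T_k}^c$ onto one indexed by some $B_k$; using that the coordinate complement of a coordinate subspace is unique, together with $W_{T_k}$ itself being a coordinate subspace for $P$, $Q$ also maps the subspace indexed by $A_{T_k}$ onto the one indexed by $B_k^c$. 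Intersecting over all $k$, $Q$ carries each invariance-frequency of $P^*$ to a coordinate subspace for $P$; since these partition the $n$ coordinates and $Q$ is invertible, $Q$ induces a dimension-preserving bijection between the invariance-frequencies, and since distinct invariance-frequencies carry distinct invariance spectra (by definition of minimal intersection) this bijection preserves signatures. Hence $Q$ is block-diagonal, up to relabeling coordinates within frequencies, with respect to the invariance-frequency decomposition, i.e.\ $P$ differs from $P^*$ only by mixing within each invariance-frequency.

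The main obstacle is this last step: the loss constrains only the images $W_{T_k}^c=\mathrm{Im}(T_k-I)$ and not the fixed spaces $W_{T_k}$, so a priori a minimizer is pinned down only at the level of the $W_{T_k}^c$-decomposition. I expect to close the gap by restricting the competitor bases to those that are themselves invariance-adapted (so that every $W_{T_k}$, not merely $W_{T_k}^c$, is a coordinate subspace for $P$), or by adding a richness condition on $\mathcal{T}$ under which the fixed spaces are recovered from the $W_{T_k}^c$; the reduction, the support lemma, and the global optimality of $P^*$ should then be routine.
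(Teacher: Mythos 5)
Your reduction and your support lemma are correct, and they are essentially the same counting argument the paper uses: for a matrix $M$, $\max_v\|Mv\|_0$ equals the number of nonzero rows of $M$ (the union of the column supports), which is at least $\operatorname{rank}M$, with equality exactly when the relevant subspace is a coordinate subspace. Where you diverge from the paper is in \emph{which} matrix the $\ell_0$-max is applied to, and this is precisely the source of the gap you flag at the end. You work with the informal main-text loss $\max_v\|[(T-I)v]_P\|_0$, which constrains only $\operatorname{Im}(T-I)=W_T^c$ to be a coordinate subspace; the gap is genuine and not closable without modifying the loss. Concretely, take $T=\bigl(\begin{smallmatrix}1&0\\1&2\end{smallmatrix}\bigr)$, so $\ker(T-I)=\mathbb{R}(e_1-e_2)$ and $\operatorname{Im}(T-I)=\mathbb{R}e_2$; the ground-truth basis is $P^*=\{e_1-e_2,\,e_2\}$, but the standard basis already attains $\max_v\|(T-I)v\|_0=1$ and is therefore also a minimizer of your version of the loss, even though $W_T$ is not a coordinate subspace for it. So restricting competitors to invariance-adapted bases, as you propose, would assume away most of what is to be proved.

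The paper's formal statement (in the appendix) avoids this by applying the $\ell_0$-max to the \emph{transpose}, $L_T(P)=\max_v\|([T]_P^t-I)[v]_P\|_0$. Under the no-mixing hypothesis, $[T]_{P^*}-I$ is supported in the $(A_T^c,A_T^c)$ block, so its transpose is too and $L_T(P^*)=|A_T^c|$. For an arbitrary competitor $\hat P$, your same lemma applied to $[T]_{\hat P}^t-I$ says the max equals the number of nonzero rows of $[T]_{\hat P}^t-I$, i.e.\ the number of nonzero \emph{columns} of $[T]_{\hat P}-I$; this is at least $\operatorname{rank}(T-I)=|A_T^c|$, and equality forces $|A_T|$ basis vectors of $\hat P$ to lie in $\ker(T-I)=W_T$, i.e.\ forces $W_T$ itself to be a coordinate subspace of the minimizer. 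That is exactly the information your version cannot extract, and it is what lets the paper conclude that the change of basis carries each $\mathbb{R}^{A_T}$ onto some $\mathbb{R}^{\hat A_T}$ and hence respects the invariance-frequencies. (Even then, both your sketch and the paper's proof are terse about why the complements $A_T^c$ are also respected, which is needed for the atoms involving $W_T^c$; with both $W_T$ and $W_T^c$ pinned down as coordinate subspaces this follows from uniqueness of the coordinate complement, as you note, but the transposed loss by itself only pins down the kernels.) In short: keep your lemma, but apply it to the transposed matrix as in the paper's formal statement; the rest of your argument then goes through.
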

For the formal version of statement and its proof, see Appendix.
Unfortunately, the direct optimization of this objective is difficult because
$L_0$ optimziation is in general NP-hard \cite{l0_opt}, let alone its maximum.
Extension of this result to nonlinear case and crafting of the trainable objective function
that is accurately aligned to this result 
is a future work.

\subsection{Uniqueness of representation}

It turns out that, 
%, at least in a case in which
%every invariance-frequency can be expressed as an intersection of invariant spaces, 
if the latent space is invariance-adapted, 
the representation mapping that defines the decomposition can be identified uniquely up to a mixing within each invariance-frequency.
The result below 
%is an informal statement of this claim, and 
is an analogue of the block-identifiability results in \citet{content_isolate}.
or the formal version of this result, see Appendix \ref{sec:appendix_proof}.
%\begin{proposition}[Informal]
%Suppose that $\{V_j\}$ consists of the minimal intersections of $T$-invariant spaces \kf{and their complements} $\{W_T,W_t^c\mid T\in \mathcal{T}\}$, and
%suppose that there exists an invertible map $h :  \mathcal{X} \to  \oplus V_j$.  If $\mathcal{T}$ satisfies a certain transitivity assumption
% then $\{V_j\}$ are block identifiable, that is, if there is any other such $\tilde h$ with 
%$\{\tilde V_j\}$, then there exists an invertible map between $V_j$ and $\tilde V_j$.
%\label{thm:identifiability}
%\end{proposition}
\begin{proposition}[Informal]
Let $\mathcal{T}$ be augmentations of $\mathcal{X}$ and 
$h:\mathcal{X}\to \mathcal{Z}$ be an invertible representation. Suppose that $h$ is invariance-adapted with $\{V_j\}$ its corresponding set of invariance-frequencies.  
If $\mathcal{T}$ satisfies a certain transitivity assumption,
then $\{V_j\}$ are block identifiable, that is, if there is any other invariance-adapted representation $\tilde h$ with 
invariance-frequency $\{\tilde V_j\}$, then there exists an invertible map between $V_j$ and $\tilde V_j$ after reordering of $\{\tilde V_j\}$.
\label{thm:identifiability}
\end{proposition}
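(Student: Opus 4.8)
To prove Proposition~\ref{thm:identifiability}, the plan is to study the \emph{transition map} $\psi := \tilde h\circ h^{-1}\colon \mathcal Z\to\tilde{\mathcal Z}$, which is invertible because $h$ and $\tilde h$ are, and to show that it is block diagonal with respect to the two frequency decompositions: there is a permutation $\sigma$ of the frequency labels such that, for every $z\in\mathcal Z$, the $\tilde V_{\sigma(j)}$-block of $\psi(z)$ depends only on the $V_j$-block of $z$. Granting this, the $\tilde V_{\sigma(j)}$-block of $\psi$ factors through the $V_j$-block, yielding a map $\psi_j$ from the coordinate image of $V_j$ to that of $\tilde V_{\sigma(j)}$; since $\psi$ then splits as the (reindexed) direct product of the $\psi_j$, invertibility of $\psi$ forces every $\psi_j$ to be invertible, which is exactly the asserted block-identifiability and yields $\dim V_j=\dim\tilde V_{\sigma(j)}$ along the way. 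This mirrors the content-block argument of \citet{content_isolate}, with frequencies playing the role of the single content block there.

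First I would produce the permutation $\sigma$. The essential observation is that each $W_T$ is \emph{intrinsic}: it is the $h$-image of the fixed-point set $\{x\in\mathcal X: Tx=x\}$, an object defined without reference to any representation. Hence $\psi(W_T)=\tilde W_T$ for every $T\in\mathcal T$, and since $\psi$ is a bijection, $\psi$ carries every intersection $\bigcap_{T\in S}W_T$ onto $\bigcap_{T\in S}\tilde W_T$. In an invariance-adapted representation these subspaces are spanned by subsets of coordinates, so $\psi$ induces a dimension-preserving isomorphism between the meet-semilattices of coordinate subspaces generated by $\{W_T\}$ and by $\{\tilde W_T\}$ under intersection; a routine inclusion--exclusion bookkeeping on this semilattice then recovers the Boolean refinement, i.e.\ the frequency decomposition together with its invariance spectrum (the binary table on the right of Table~\ref{tab:spectrum}). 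Matching spectra yields $\sigma$, with $V_j$ and $\tilde V_{\sigma(j)}$ fixed by \emph{exactly} the same augmentations. A minor subtlety here is that the complementary subspaces $W_T^c$ are not themselves fixed-point sets; one handles this either by restricting attention to frequencies cut out by honest intersections of the $W_T$, or by transporting complements through the bijection $\psi$.

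The crux is showing that the block map $\psi_j$ is well defined, and this is where the transitivity assumption enters, in the same way that the ``full support of the augmented style given the content'' hypothesis does in \citet{content_isolate}. Fix a frequency $V_j$ and two latent points $z,z'$ agreeing on their $V_j$-coordinates. The transitivity hypothesis should say that the augmentations fixing $V_j$, and their compositions, act transitively on the complementary directions; so there is $g=T_{i_1}\circ\cdots\circ T_{i_r}$, each $T_{i_\ell}$ fixing $V_j$, with $g\bigl(h^{-1}(z)\bigr)=h^{-1}(z')$. Because $V_j$ and $\tilde V_{\sigma(j)}$ have the same invariance spectrum, each $T_{i_\ell}$, hence $g$, also fixes the $\tilde V_{\sigma(j)}$-coordinates in the $\tilde h$-representation (the defining identity of an invariance-adapted representation is clearly stable under composition). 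Writing $\pi_{\tilde V_{\sigma(j)}}$ for the corresponding coordinate projection,
$$\pi_{\tilde V_{\sigma(j)}}\bigl(\psi(z')\bigr)=\pi_{\tilde V_{\sigma(j)}}\bigl(\tilde h\, g\, h^{-1}(z)\bigr)=\pi_{\tilde V_{\sigma(j)}}\bigl(\psi(z)\bigr),$$
the last equality because $g$ fixes those coordinates in the $\tilde h$-representation. So $\psi_j$ is well defined; running over all $j$ completes the reduction of the first paragraph.

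The main obstacle I anticipate is not a computation but the formulation of the transitivity assumption: it must be weak enough to be reasonable and to specialize to the hypothesis of \citet{content_isolate} in the two-block (content/style) case, yet strong enough that, for every frequency $V_j$, \emph{any} configuration of the complementary frequencies can be reached from \emph{any} other by a composition of $V_j$-fixing augmentations. A secondary point, invisible in the linear DFT picture but genuine in the nonlinear setting, is the careful identification of each $W_T$ with a coordinate subspace and the verification that ``$T$ fixes these coordinates'' is preserved under composition; both are routine once the formal hypotheses are fixed, but they are what the appendix must spell out.
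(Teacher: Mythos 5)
Your proposal is correct and follows essentially the same route as the paper's proof: you analyze the transition map $\tilde h\circ h^{-1}$, match frequencies by their invariance spectra, use the transitivity assumption together with the fact that a $V_j$-fixing augmentation is also $\tilde V_{\sigma(j)}$-fixing to show each block of the transition map is well defined, and deduce invertibility of each block from invertibility of the whole. The only cosmetic differences are that you allow compositions of augmentations where the paper's formal assumption posits a single $V_k$-fixing $T$, and you spell out the construction of the permutation $\sigma$ (via the intrinsic fixed-point sets) which the paper dispatches in one line.
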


 \section{Related works}

As we have shown in the previous section and Table \ref{tab:spectrum}, our invariance-adaptation has a close connection with harmonic analysis of groups.  
In particular, if $\mathcal{T}$ is a group, the invariance-adapted basis can be recovered from Fourier basis. 
However,  the setup of our study is fundamentally different from those investigating the way to learn irreducible representations of groups, including \cite{comm_lie} as well as more recent works such as \citet{noncomm_lie, dehmamy2021automatic} because we do not assume the strict group structure for our choice of $\mathcal{T}$.  For example, as we show in the next experimental section, we can define an invariance-adapted space even when the members of $\mathcal{T}$ can not necessarily be simultaneously block-diagonalized.
There is still much room left for the investigation of the structural relation between the data space and the set of augmentations that does not necessraily form a group.

%our invariance-adaptation is a weak version of the harmonic analysis because we are compressing the information of the action of $T$ into binaries. 

Meanwhile, our invariance-adaptation describes much finer invariance decomposition than those of the relevant works that study the structure of the dataset defined by augmentations.
In particular, \citet{content_isolate} showed that contrastive learning can decompose $\mathcal{X}$ into the space $V_{content}$ that is invariant under the action of all members of $\mathcal{T}$ and its complement $V_{style}$.  
In the example of Figure \ref{fig:symmetry_decomp}, $V_{content}$ is our definition of $V_1$, which is the intersection of $W_{T_1}, W_{T_2}$ and $W_{T_3}$. 
Thus, in the context of Figure \ref{fig:symmetry_decomp}, the structure discussed in \citet{content_isolate} is
%therefore 
only the decomposition of the space into $V_1$ and $\bigoplus_{j \neq 1}  V_j$.
\citet{content_isolate} proves the analogue of Proposition \ref{thm:identifiability} for this decomposition with a probabilistic transitivity condition along with a simply connected manifold assumption.
Meanwhile, \citet{orbit_decomp} discusses a further decomposition of $V_1$, but not its complement.
Also, some works \cite{imagerot, Dangovski, colorful, Zigsaw} explore the feature learning based on the behavior of $\mathcal{T}$, but their main focus is the analysis of the $\mathcal{T}$-variant structure. 

% Conducting the complete harmonic analysis of $\mathcal{T}$ would not only recover the entire 
% equivariance relations, but also their decomposition into irreducible components, as discussed in \citet{comm_lie, noncomm_lie}.  
% Our proposed decomposition does not go so far to uncover the complete equivariance relations, but describes the intersections of invariances that are neither discussed in \citep{content_isolate, orbit_decomp} nor the literatures like 

% Also, the setup of our study is fundamentally different \ref{comm_lie, noncomm_lie} that assumes $\mathcal{T}$ to be specific family of groups; in our study, we do not assume such a prior knowledge about the structure of $\mathcal{T}$. 

We shall also mention a theoretical result shown by \citet{invert} regarding the $L1$ type contrastive loss. 
\citet{invert} assume that the observation $X$ are generated from latent $Z$ as $h_*^{-1}(Z) = X$ with $Z$ being uniformly distributed over the sphere. 
They show that if the conditional distribution of $P(Z| h_*(T(h_*^{-1}(Z))) )$ is of form $C \exp(-\| Z - h_*T(h_*^{-1}(Z) \|_1 /\tau)$, then the true $h_*^{-1}$ can be identified up to permutation using the L1 type contrastive loss. 
While the loss they use in their analysis is related to our study, we are different in that we make an algebraic assumption about $\mathcal{T}$ and $\mathcal{X}$ that there exists at least one latent space that is invariance-adapted to $\mathcal{T}$, 
%as opposed to a distributional assumption .
while they make a distributional assumption as well as simply connected assumption on the space.
In the next section, we study if we can experimentally learn an invariance-adapted decomposition using our $L_1$ contrastive loss.

\section{Experiments}

\subsection{Linear Case}
The goal in this experiment section is to use our contrasitve loss function \eqref{eq:L1loss} to recover an invariance-adapted latent space from the set of augmented observations in a self-supervised way. 
Our experiment of a linear case is instrumental in describing the effect of \eqref{eq:L1loss}. 
In this experiment, we assume that the dataspace $\mathcal{X}$ is a $6$ dimensional linear subspace embedded in a $12$ dimensional input space via a linear embedding function $h_*^{-1}: \mathcal{Z} = \mathbb{R}^{6} \to \mathcal{X}\subset \mathbb{R}^{12}$.  
This setup is a linear analogue of a $6$ dimensional manifold embedded in a $12$ dimensional ambient space. 
Suppose that there is an augmentation set $\mathcal{T}$ that acts on $\mathcal{X}$. 
Through  $h_*$, this induces a natural action on $\mathcal{Z}$ via $\hat{T}(z) = h_* T h_*^{-1}(z)$ for each $T \in \mathcal{T}$. 
In this experiment, we assume that $h_*$ is invariance-adapted to $\mathcal{T}$. 
That is, we assume that each $\hat{T}$ acts on %the input 
$z \in \mathcal{Z}$ by keeping a subset of coordinates $A_T \subset \{1,...,6 \}$ fixed while mixing its complementary coordinates $A_T^c$. 
%by mixing a subset of coordinates $A_T \subset \{1,...,6 \}$ while leaving its complementary coordinates $A_T^c$ fixed. 
In the notation of the previous section, this means that $\mathbb{R}^{A_T} = W_T.$
This situation can be realized when each $\hat{T}$ is a direct sum of identity map and a block map (right panel, Figure \ref{fig:experimental_setup}).
See the left panel in Figure \ref{fig:experimental_setup} for the visualization of this setup. 
The goal of the trainer in this experiment is to learn an invariance adapted latent space from the observation $\{ (T(x) , x) ; T \in \mathcal{T}, x \in \mathcal{X} \}$ only.
Because neither the form of the ground-truth $h_*$ nor the form of $T$ used in the observation set are known to the trainer, the frequency structure in $\hat{T}$ cannot be directly obtained.

\begin{figure}[ht!]
    \includegraphics[scale=0.38]{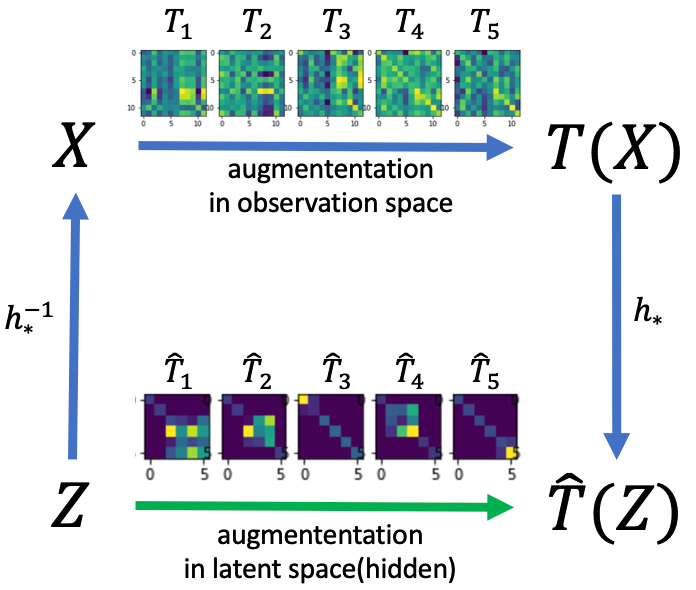}
    \hspace{0.1cm}
    \includegraphics[scale=0.35]{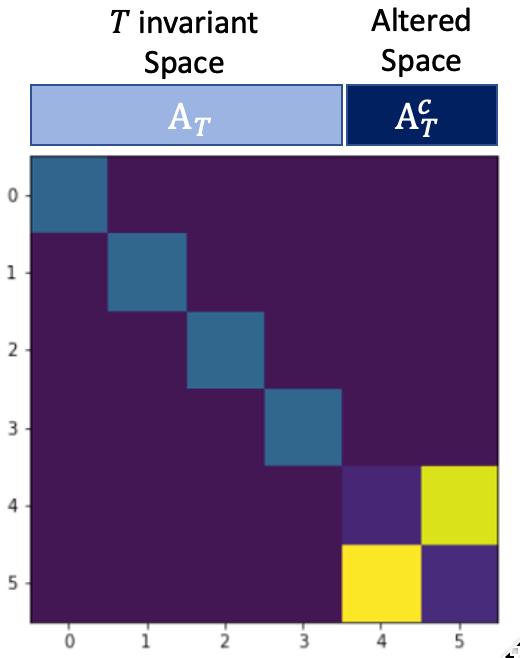}
    \caption{Left panel: Relation between the  augmentation $\hat T=h_* T h_*^{-1}$ in the latent space  and augmentation $T$ in the observation space.
    Right panel: For each $\hat{T}$ , the identity map part corresponds to invariant space $W_T$, and the block part corresponds to $W_T^c$.
    } 
        \label{fig:experimental_setup}
\end{figure}

% \begin{figure}[tp!]
% \centering
% \includegraphics[scale=0.3]{block_diag2.png}
% \caption{For each $\hat{T}$ , the identity map part corresponds to invariant space $W_T$, and the block part corresponds to $W_T^c$.}
% \label{fig:latent_aug}
% \end{figure}

% \begin{figure}[tp!]
% \centering
%     \includegraphics[scale=0.4]{experimental_setup.png}
%     \caption{Relation between the  augmentation $\hat T=h_* T h_*^{-1}$ in the latent space  and augmentation $T$ in the observation space.} 
% \label{fig:setting}
% \end{figure}

% \begin{figure}[tp!] 
%     \centering
%     \subfloat[]{%
%         \includegraphics[scale=0.25]{experimental_setup.png}%
%         \label{fig:a}%
%         }%
%         \hspace{1.0cm}
%     \subfloat[]{%
%         \includegraphics[scale=0.25]{block_diag2.png}%
%         \label{fig:b}%
%     }%
%     \caption{(a) Relation between the  augmentation $\hat T=h_* T h_*^{-1}$ in the latent space  and augmentation $T$ in the observation space. In a invariance-adapted latent space, each augmentations is a direct sum of identity map and a block. (b) For each $\hat{T}$ , the identity map part corresponds to invariant space $W_T$, and the block part corresponds to $W_T^c$.}
%     \label{fig:experimental_setup} 
% \end{figure} 

%especially when, as in most contrastive learnings, $T$s are selected randomly.
To generate synthetic samples on $\mathcal{X}$,  we samplped 5000 instances of $Z$ from standard Gaussian distribution, prepared a linear embedding $h_*^{-1}: \mathbb{R}^{6}\to \mathbb{R}^{12}$ as a $\mathbb{R}^{12 \times 6}$ matrix with random entries, and computed each $X \in \mathcal{X}$ as $X= h_*^{-1}(Z)$.
To design $\mathcal{T}$ to which $h^*$ is invariance-adapted, we first created a set of $\hat{T}$ in the form of the right panel in Figure \ref{fig:experimental_setup} by placing a randomly sized block with standard gaussian entries at a random block position.  We then created each transformation $T \in \mathcal{T}$ on the observation domain as $T = h_*^{-1} \hat{T} h_*$  and used them for the contrastive learning on the set of $\mathcal{X}$ produced above. 

% To find another invariance-adapted latent space from $(T(X), T'(X))$ pairs alone, we trained an encoder $h_\theta$ with our $L_1$ contrastive loss. 
% We chose a diverse enough set of $T$ in this experiment so that, in the hidden invariance-adapted space, each $V_k$ is one dimensional. 
% Thus, the goal of this experiment is to learn an encoder $\hat{h}_\theta^{-1}$ that identifies $h_*$ up to a permutation of the coordinates and scaling. 

To learn an invariance-adapted latent space from the pairs of $(T(X), X)$ alone, we trained  $\hat{h}_\theta$ by our $L_1$ contrastive loss \eqref{eq:L1loss} with the reconstruction-error as the anti-degeneration loss;
more specifically, instead of training the entropy, we simultaneously trained a decoder function %$\hat{h}_\theta^{-1}$ 
$g_\theta \approx \hat{h}_\theta^{-1}$ together with $\hat{h}_\theta$ by adding the reconstruction loss $\|g_\theta\circ h_\theta(X) - X\|^2$ with weight $0.05$.
During our training, neither the choices nor the forms of the used $T$s are known to the trainer.
% \begin{multicols*}{3}
% [
% \section{First Section}
% All human things are subject to decay. And when fate summons, Monarchs must obey.
% ]
% \end{multicols*}

\begin{figure*}
\begin{center}
\hspace*{-2.1cm}\includegraphics[scale=0.43]{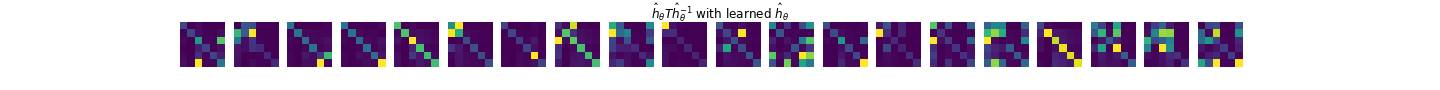} \\
\hspace*{-2.1cm}\includegraphics[scale=0.43]{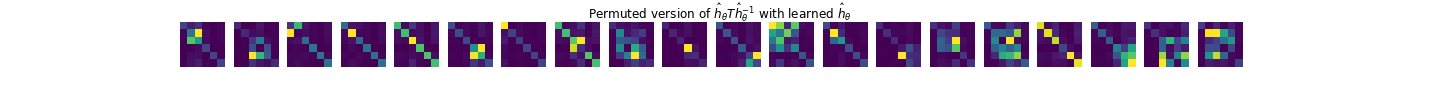} \\
\hspace*{-2.1cm}\includegraphics[scale=0.43]{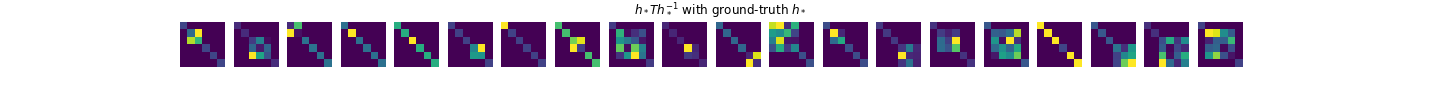}
\vspace{-1cm}
\caption{Visualizations of transformations in the latent space. 
First row: $\hat{h}_\theta T  \hat{h}_\theta^{-1}$ for the same set of $T$s as the third row with $\hat{h}_\theta$ estimated from the $L_1$ contrastive loss. Second row: a coordinate permutation is applied to all the matrices in the first row.  Third row: $\hat{T}= h_* T h_*^{-1}$ for the set of $T$s used in the training of $h_\theta$. In the third row, all diagonal entries not belonging to blocks are $1$.
%All diagonals not belonging to blocks are approximately $1$ ($0.946 \pm 0.042$). 
%Because the size of the blocks and hence the size of invariant spaces are same for the first and the second row, our trained $\hat{h}_\theta^{-1}$ identifies the invariance-adapted space up to permutation. 
} 
\label{fig:Linear_aug} 
\end{center}  
\end{figure*}

Figure \ref{fig:Linear_aug} illustrates the result of our training.
The matrices in the third row are the true $h_* T h_*^{-1}$ with different $T$s used in our training.
The matrices in the first row are $\hat{h}_\theta  T  \hat{h}_\theta^{-1}$ obtained with trained $\hat{h}_\theta$, and the matrices in the second row are the result of applying the coordinate permutation to $\hat{h}_\theta  T  \hat{h}_\theta^{-1}$ to best match the positions of the blocks in $h_* T h_*^{-1}$. 
%The matrices in the first row of 
% \ref{fig:Linear_aug} are the sets of $h_*^{-1} T h_*$ used in our training.
%The matrices in the second row 
%of Figure \ref{fig:Linear_aug} 
%are the sets of  $\hat{h}_\theta^{-1}  T  \hat{h_\theta}$ with $\hat{h}_\theta$ estimated by the $L_1$ contrastive loss, where we used reconstruction-loss as the anti-degeneration loss.
We see that the learned invariant subspace and its complement show perfect match with the ground truth for each $T$. 
%size of the blocks in the second row are the same as the blocks in the blocks in the first row.
Also, all diagonal entries not belonging to the estimated non-trivial blocks are approximately $1$ ($0.946 \pm 0.042$). 
Thus, $\hat{h}_\theta$ identifies all six $V_i$s up to a permutation. 
In other words, we have successfully learned an invariance-adapted basis that can represent the invariant spaces of all augmentations as \textit{sets} of coordinates.

% The style files for ICLR2022-OSC and other conference information are available online at:
% \begin{center}
%   \url{https://objects-structure-causality.github.io/submit.html}
% \end{center}
% The file \verb+iclr2022_osc_workshop.pdf+ contains these
% instructions and illustrates the
% various formatting requirements your ICLR paper must satisfy.
% Submissions must be made using \LaTeX{} and the style files
% \verb+iclr2022_conference.sty+ and \verb+iclr2022_conference.bst+ (to be used with \LaTeX{}2e). The file
% \verb+iclr2022_conference.tex+ may be used as a ``shell'' for writing your paper. All you
% have to do is replace the author, title, abstract, and text of the paper with
% your own.

% The formatting instructions contained in these style files are summarized in
% sections \ref{gen_inst}, \ref{headings}, and \ref{others} below.

\subsection{Nonlinear case} \label{sec:col_mnist}
\label{gen_inst}
%In order to verify if our method can withstand the nonlinearity of the underlying data structure, we have experimented on stylized MNIST, a modified version of MNIST in which the digits are randomly colored and rotated. 
To verify if our method can handle a case in which $h_*$ is nonlinear, we experimented when $\mathcal{X}$ is a stylized MNIST, a modified version of MNIST in which the digits are randomly colored and rotated (Figure \ref{fig:stylized_mnist}.) 
Each image in this dataset is $32 \times 32 \times 3$ dimensional 
\begin{figure}[ht!]
    \centering
    \includegraphics[scale=0.2]{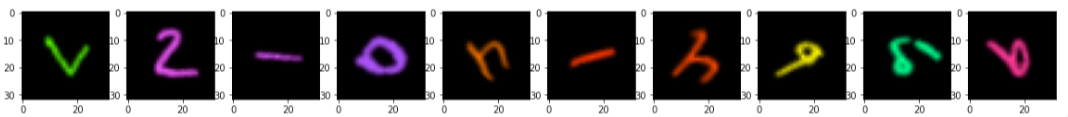}
    \caption{Stylized MNIST dataset}
    \label{fig:stylized_mnist}
\end{figure}
Our goal in this experiment is to find a latent space for $\mathcal{X}$ that is invariance-adapted to $\mathcal{T}$ consisting of color-jitterings and rotations. 

The training in this experiment was more stable when we used a latent space consisting of tensors instead of flat-vectors. 
In particular, instead of setting the latent space to be $\mathbb{R}^{512}$, we used $\mathbb{R}^{32 \times 16}$ as the latent space.
To train the encoder for this tensor latent space, we used group lasso distance \citep{groupLasso} instead of lasso distance;
that is, we chose our encoder $h_\theta$ from the family of maps $\mathbb{R}^{32\times 32 \times 3} \to \mathbb{R}^{32 \times 16}$, and trained it using 
\begin{align}
\begin{split}
    \mathbb{E}_{T,X} [d( h_\theta(T(X)) , h_\theta (X)))/\tau]+  (- H(h_\theta)),
    \label{eq:group_lasso_loss} 
    \end{split}
\end{align}
with $d(z^{(1)}, z^{(2)}) =  \sum_{k \in 0:16} \|z_{k}^{(1)} - z_{k}^{(2)} \|_2$, where $z_{k}^{(i)} \in \mathbb{R}^{32}$ represents the $k$th row of the tensor $z^{(i)}$. 
This tensor representation with group lasso produced better results % than the flat-vector latent with naive lasso loss in \eqref{eq:L1loss} 
most likely because it has affinity to the structure in which a multiple set of spaces reacts in the same way to a given set of actions (isotypic spaces); if the transformation in the latent space $\mathbb{R}^{a \times s}$ has a matrix representation $\mathbb{R}^{a \times a}$ to be applied from left, the transformation would have a same effect on each column of the tensor. 
This is a common setting in the harmonic analysis of groups \cite{clausen}, and %this type of
such tensor assumption has been used in \citet{topological_defects} as well, %which is
succeeding to learn the action from a sequence of images.   

Just like in the convention of normalization in contrastive learning, we also normalized each one of $16$ row-vectors.
We trained the encoder with ResNet and set the temperature $\tau$ of the contrastive learning to be $0.001$. Please see Appendices~\ref{sec:arch} and \ref{sec:colormnist_detail} for more details of the architecture and experimental setup.
% Interestingly, for this experiment, the training was more stable when we used the positive pair of the form $(T(X), X)$.
Our intuition dictates that rotations and color-jitterings are orthogonal in the sense that the latent %subspace 
coordinates altered by the member of the former family is disjoint from the latent subspace altered by the latter family. 
Indeed, this turns out to be exactly what we observed in the space learned by our group lasso contrastive learning.
Figure \ref{fig:delta} illustrates the vertically stacked 512 dimensional vectors of flattented $| h_\theta(T(X)) - h_\theta(X) |$ 
for random $T$. 
The first row is produced with
%randomly rotation transformation 
random rotations
$T_{rot}$, and the second row with random color-jitterings $T_{color}$. 
As we can see in the 
the figures, the dimensions that are altered by $T_{rot}$ and $T_{color}$ are exclusive. 
%are not altered by $T_{color}$, and vice versa.
Moreover, we can observe the dimensions that are fixed by both $T_{rot}$ and $T_{color}$ as well. 
We can also see that almost all 512 dimensions are used in the representation (third row).
\begin{figure}
    \centering
    \includegraphics[scale=0.3]{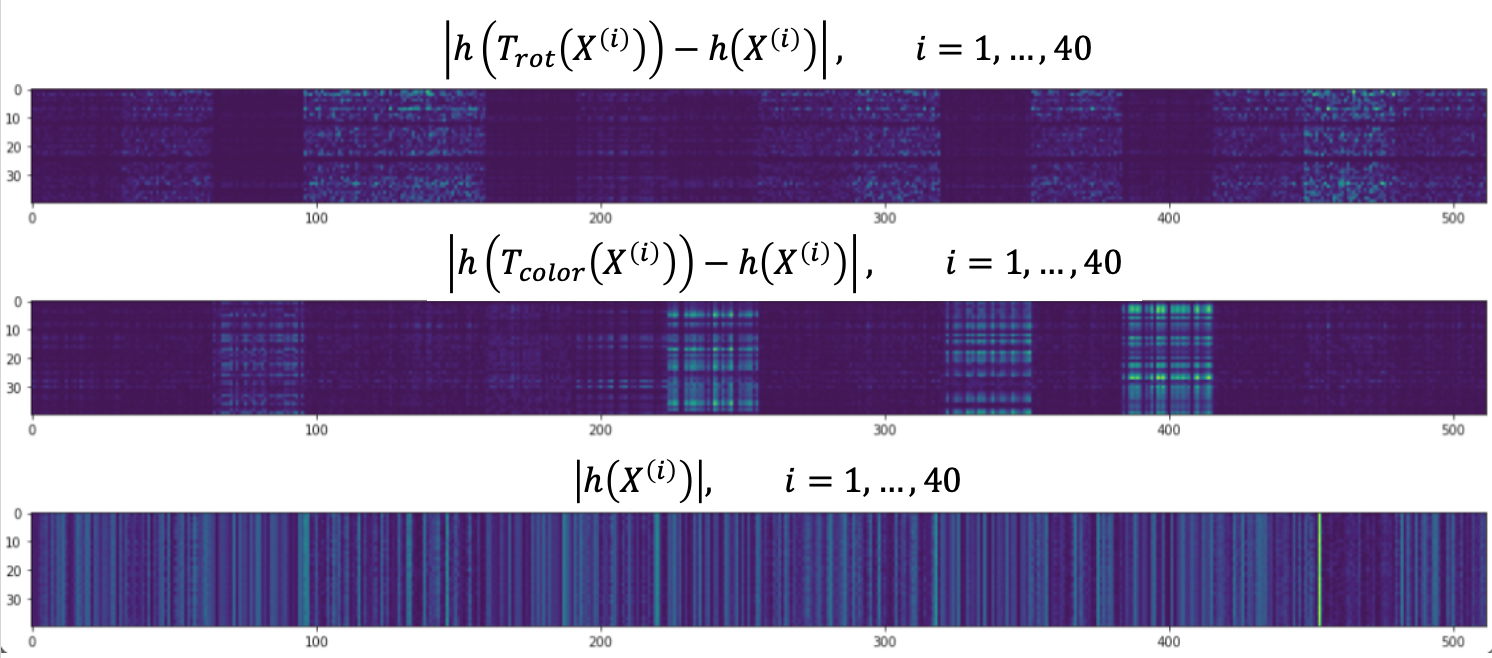}
    \caption{First row: stacked 512 dimensional vectors representing the flattened version of $h_\theta(T_{rot}(x)) - h_\theta(x)$ for random rotations $T_{rot}$.   
    Second row: stacked 512 dimensional vectors representing the flattened versions of $h_\theta(T_{color}(x)) - h_\theta(x)$ for random color rotations $T_{color}$. 
    Third row: stacked 512 dimensional vectors representing the flattened version of $h_\theta(x)$.
    For a better visualization, we took absolute values of each entries in $h_\theta(T(x)) - h_\theta(x)$ so that dark (small value) regions represent the invariant spaces. }
    \label{fig:delta}
\end{figure}
To measure the extent to which the space altered by rotations is complementary to the space altered by the color transformations, we evaluated the following value
\begin{align}
\begin{split}
    &\overline{\mathbb{E}_{T_{rot}, X} } [|h_\theta(T_{rot}(X)) - h_\theta(X)|^2] \times \\
    &~~~\overline{\mathbb{E}_{T_{color}, X}} [|h_\theta(T_{color}(X)) - h_\theta(X)|^2]
\end{split}
\end{align}
Where $\overline{A}$ designates the normalization of the vector $A$. By definition, this evaluates to 0 when the
$T_{rot}$-sensitive space has empty intersection with the $T_{color}$-sensitive space.  In our experiment, simCLR yielded the score of 1.6e-3$\pm$1e-4, while our group-lasso contrastive learning yielded the score of  1.2e-4$\pm$1e-5, numerically validating the complementary decomposition we visually see in Figure \ref{fig:delta}. 
 
Also, to verify whether our decomposed representation holds enough information to predict the important features of our dataset, namely the rotation angle, the color, and the digit shape,  we conducted a linear regression of each one of these features based on the learned representation. 
Table \ref{tab:lcp} summarizes the result of our linear evaluation. For the angle and color, we encoded a pair of images $x_1, x_2$ transformed with random color and random rotation, concatenated the encoded output $z_1, z_2$, and linearly predicted the color difference in RGB and the sine value of the angle difference. 
Because the theme of our study is the representation of the encoder output, we conducted the evaluation on the final output layer. 
As we see in Table \ref{tab:lcp}, our model achieves competitive scores to simCLR for all the features, and predicts the color differences particularly well. 
The raw representation (flattened $32^*32^*3$ vector of image pixels) performed poorly on the test evaluation.
We shall also note that, although the latent represetation obtained from the simCLR-trained encoder (Figure \ref{fig:delta_simCLR}, Appendix) does not feature the decomposition like the one observed in Figure \ref{fig:delta}, it has some ability to predict the features like angle and color, possibly indicating the presence of a hidden invisible structure underneath the representation. 
To visually see if our representation retains enough information of the original images, we also trained 
decoder on the fixed encoder trained from our $L_1$ contrastive objective function. 
As we can see in Figure \ref{fig:img_reconstr} (Appendix)
%the reconstruction, 
our representation encodes strong information regarding the orientation, shape and color of the image while featuring the decomposability.

% \begin{table}[t]
% \caption{Sample table title}
% \label{figlinear}
% \begin{center}
% \begin{tabular}{ll}
% \multicolumn{1}{c}{\bf PART}  &\multicolumn{1}{ccc}{\bf Digit acc % A}
% \\ \hline \\
% SimCLR         &Input terminal \\
% Axon             &Output terminal \\
% Soma             &Cell body (contains cell nucleus) \\
% \end{tabular}
% \end{center}
% \end{table} 

\begin{table}[t]
\small
  \centering
  \scalebox{0.68}{
  \begin{tabular}{l|c|c|c}
  \hline
    Features  &  Ours  &   SimCLR  & Raw representation \\
  \hline
    Digit Accuracy (Logistic)  & $0.5836 \pm 0.0013$ &  $0.6091 \pm 0.0015$ &  $0.5175 \pm 0.0027$ \\ 
    Angle Prediction Error &  $0.0102\pm 0.0002$ & $0.0245 \pm 0.0023$ & $42.4783 \pm 31.6392$   \\
    Color Prediction Error &  $0.0037\pm 0.0001$ & $0.0175 \pm 0.0001$  & $9.4576 \pm 4.5193$  \\
  \hline
 \end{tabular}}
   \caption{Linear evaluation accuracy scores on the learned representations. For Digit, we conducted Linear Logistic Regression. For the prediction of color and angle, we conducted linear regression on the pair of images with random color and rotation to predict the color difference and angle difference. Raw representation fails to produce reasonable outputs outside the training set. }
     \label{tab:lcp}
     
% \caption{Linear classification result on the learned representations. } 
 \vspace{-0.1cm}
\end{table}
\section{Conclusion}
In this work, we introduced a new type of invariance decomposition called invariance-adaptation, and explored a connection between this decomposition and the representation learned by the contrastive learning with Lasso type distance. 
We have shown that, without any auxiliary regularization, the Lasso type contrastive loss has the ability to decompose the dataspace in such a way that the invariant space of any augmentaion transformation $T$ can be represented as a subset of coordinates. 
Our invariance-adaptation not only features the decomposition of the space into entirely insensitive components and its complement, but also carries binary-level information about how each augmentation act on each component of the latent space. 
At the same time, however, there seems to be much room left for learning methods of invariance-adapted latent space. There are many other possibilities for obtaining the coordinates of invariance as well, including the form of the latent space involving tensor structures, for example.
This study should open a door to
%may suggest 
a new approach to analyize the contrastive learning and self-supervised learning in general.

\bibliographystyle{icml2022}
\bibliography{references}

\begin{thebibliography}{36}
\providecommand{\natexlab}[1]{#1}
\providecommand{\url}[1]{\texttt{#1}}
\expandafter\ifx\csname urlstyle\endcsname\relax
  \providecommand{\doi}[1]{doi: #1}\else
  \providecommand{\doi}{doi: \begingroup \urlstyle{rm}\Url}\fi

\bibitem[Bachman et~al.(2019)Bachman, Hjelm, and Buchwalter]{bachman2019amdim}
Bachman, P., Hjelm, R.~D., and Buchwalter, W.
\newblock Learning representations by maximizing mutual information across
  views.
\newblock \emph{Advances in Neural Information Processing Systems(NeurIPS)},
  2019.

\bibitem[Bouchacourt et~al.(2020)Bouchacourt, Ibrahim, and
  Deny]{topological_defects}
Bouchacourt, D., Ibrahim, M., and Deny, S.
\newblock Addressing the topological defects of disentanglement.
\newblock 2020.

\bibitem[Chen et~al.(2020)Chen, Kornblith, Norouzi, and Hinton]{chen2020simclr}
Chen, T., Kornblith, S., Norouzi, M., and Hinton, G.
\newblock A simple framework for contrastive learning of visual
  representations.
\newblock \emph{International Conference on Machine Learning(ICML)}, 2020.

\bibitem[Clausen \& Baum(1993)Clausen and Baum]{clausen}
Clausen, M. and Baum, U.
\newblock \emph{Fast Fourier Transforms}.
\newblock Wissenschaftsverlag, 1993.

\bibitem[Cohen \& Welling(2014)Cohen and Welling]{comm_lie}
Cohen, T. and Welling, M.
\newblock Learning the irreducible representations of commutative lie groups.
\newblock \emph{Proceedings of Machine Learning Research(PMLR)}, 2014.

\bibitem[Dangovski et~al.(2022)Dangovski, Li~Jing, Han, Srivastava, Cheung,
  Agrawal, and Soljačić]{Dangovski}
Dangovski, R., Li~Jing, C.~L., Han, S., Srivastava, A., Cheung, B., Agrawal,
  P., and Soljačić, M.
\newblock Equivariant contrastive learning.
\newblock \emph{International Conference on Learning Represenations(ICLR)},
  2022.

\bibitem[Dehmamy et~al.(2021)Dehmamy, Walters, Liu, Wang, and
  Yu]{dehmamy2021automatic}
Dehmamy, N., Walters, R., Liu, Y., Wang, D., and Yu, R.
\newblock Automatic symmetry discovery with lie algebra convolutional network.
\newblock \emph{Advances in Neural Information Processing Systems}, 34, 2021.

\bibitem[Feng et~al.(2018)Feng, Mitchell, Pang, Shen, and Wächter]{l0_opt}
Feng, M., Mitchell, J.~E., Pang, J.-S., Shen, X., and Wächter, A.
\newblock Complementarity formulations of l0-norm optimization problems.
\newblock \emph{Pacific Journal of Optimization}, 14\penalty0 (2):\penalty0
  273--305, 2018.

\bibitem[Fumero et~al.(2021)Fumero, Cosmo, Melzi, and Rodolà]{product_man}
Fumero, M., Cosmo, L., Melzi, S., and Rodolà, E.
\newblock Learning disentangled representations via product manifold
  projection.
\newblock \emph{Proceedings of the 38th International Conference on Machine
  Learning}, 139, 2021.

\bibitem[Garsia \& Ömer Egecioglu(2020)Garsia and Ömer Egecioglu]{Garsia}
Garsia, A.~M. and Ömer Egecioglu.
\newblock \emph{Lectures in Algebraic Combinatorics}.
\newblock Springer, 2020.

\bibitem[Gidaris et~al.(2020)Gidaris, Singh, and Komodakis]{imagerot}
Gidaris, S., Singh, P., and Komodakis, N.
\newblock Unsupervised representation learning by predicting image rotations.
\newblock \emph{International Conference on Learning Represenations(ICLR)},
  2020.

\bibitem[Glorot et~al.(2011)Glorot, Bordes, and Bengio]{glorot2011deep}
Glorot, X., Bordes, A., and Bengio, Y.
\newblock Deep sparse rectifier neural networks.
\newblock \emph{International Conference on Artificial Intelligence and
  Statistics(AISTATS)}, 2011.

\bibitem[He et~al.(2016)He, Zhang, Ren, and Sun]{he2016deep}
He, K., Zhang, X., Ren, S., and Sun, J.
\newblock Deep residual learning for image recognition.
\newblock \emph{IEEE/CVF Conference on Computer Vision and Pattern
  Recognition(CVPR)}, 2016.

\bibitem[H\'{e}naff et~al.(2020)H\'{e}naff, Srinivas, Fauw, Razavi, Doersch,
  Eslami, and {van den Oord}]{henaff2020cpc}
H\'{e}naff, O.~J., Srinivas, A., Fauw, J.~D., Razavi, A., Doersch, C., Eslami,
  S. M.~A., and {van den Oord}, A.
\newblock Data-efficient image recognition with contrastive predictive coding.
\newblock \emph{International Conference on Machine Learning(ICML)}, 2020.

\bibitem[Hjelm et~al.(2019)Hjelm, Fedorov, Lavoie-Marchildon, Grewal, Bachman,
  Trischler, and Bengio]{hjelm2018learning}
Hjelm, R.~D., Fedorov, A., Lavoie-Marchildon, S., Grewal, K., Bachman, P.,
  Trischler, A., and Bengio, Y.
\newblock Learning deep representations by mutual information estimation and
  maximization.
\newblock \emph{International Conference on Learning Representations(ICLR)},
  2019.

\bibitem[Joseph et~al.(2021)Joseph, Khan, Khan, and Balasubramanian]{owod}
Joseph, K.~J., Khan, S., Khan, F.~S., and Balasubramanian, V.~N.
\newblock Towards open world object detection.
\newblock \emph{IEEE/CVF Conference on Computer Vision and Pattern
  Recognition(CVPR)}, 2021.

\bibitem[Laskin et~al.()Laskin, Srinicas, and Abbeel]{curl}
Laskin, M., Srinicas, A., and Abbeel, P.
\newblock Curl: Contrastive unsupervised representations for reinforcement
  learning.
\newblock \emph{International Conference on Machine Learning(ICML)}.

\bibitem[Li et~al.(2021)Li, Zhou, Xiong, and Hoi]{pcl}
Li, J., Zhou, P., Xiong, C., and Hoi, S.
\newblock Prototypical contrastive learning of unsupervised representations.
\newblock \emph{International Conference on Learning Represenations(ICLR)},
  2021.

\bibitem[Maas et~al.(2013)Maas, Hannun, and Ng]{maas2013rectifier}
Maas, A.~L., Hannun, A.~Y., and Ng, A.~Y.
\newblock Rectifier nonlinearities improve neural network acoustic models.
\newblock \emph{ICML Workshop on Deep Learning for Audio, Speech and Language
  Processing}, 2013.

\bibitem[Miyato \& Koyama(2018)Miyato and Koyama]{miyato2018cgans}
Miyato, T. and Koyama, M.
\newblock {cGANs} with projection discriminator.
\newblock \emph{International Conference on Learning Represenations(ICLR)},
  2018.

\bibitem[Nair \& Hinton(2010)Nair and Hinton]{nair2010rectified}
Nair, V. and Hinton, G.~E.
\newblock Rectified linear units improve restricted boltzmann machines.
\newblock \emph{{International Conference on Machine Learning(ICML)}}, 2010.

\bibitem[Noroozi \& Favaro(2016)Noroozi and Favaro]{Zigsaw}
Noroozi, M. and Favaro, P.
\newblock Unsupervised learning of visual representations by solving jigsaw
  puzzles.
\newblock \emph{European Conference on Computer Vision(ECCV)}, 2016.

\bibitem[Qiao et~al.(2019)Qiao, Wang, Liu, Shen, and
  Yuille]{weightstandardization}
Qiao, S., Wang, H., Liu, C., Shen, W., and Yuille, A.
\newblock Weight standardization.
\newblock \emph{arXiv preprint arXiv:1903.10520}, 2019.

\bibitem[Radford et~al.(2017)Radford, Kim, Hallacy, Ramesh, Goh, Agarwal,
  Sastry, Askell, Mishkin, Clark, Krueger, and Sutskever]{clip}
Radford, A., Kim, J.~W., Hallacy, C., Ramesh, A., Goh, G., Agarwal, S., Sastry,
  G., Askell, A., Mishkin, P., Clark, J., Krueger, G., and Sutskever, I.
\newblock Learning transferable visual models from natural language
  supervision.
\newblock \emph{International Conference on Machine Learning(ICML)}, 2017.

\bibitem[Shutty \& Wierzynski(2020)Shutty and Wierzynski]{noncomm_lie}
Shutty, N. and Wierzynski, C.
\newblock Learning irreducible representations of noncommutative lie groups.
\newblock \emph{arXiv preprint arXiv:2006.00724}, 2020.

\bibitem[Tian et~al.(2019)Tian, Krishna, and Isola]{tian2019cmc}
Tian, Y., Krishna, D., and Isola, P.
\newblock Contrastive multiview coding.
\newblock \emph{European Conference on Computer Vision(ECCV)}, 2019.

\bibitem[Tibshirani(1996)]{tib_lasso}
Tibshirani, R.
\newblock Regression shrinkage and selection via the lasso.
\newblock \emph{Journal of the Royal Statistical Society. Series B
  (Methodological)}, pp.\  267--288, 1996.

\bibitem[von Kügelgen et~al.(2021)von Kügelgen, Sharma, Gresele, Brendel,
  Schölkopf, Besserve, and Locatello]{content_isolate}
von Kügelgen, J., Sharma, Y., Gresele, L., Brendel, W., Schölkopf, B.,
  Besserve, M., and Locatello, F.
\newblock Self-supervised learning with data augmentations provably isolates
  content from style.
\newblock \emph{Advances in Neural Information Processing Systems(NeurIPS)},
  2021.

\bibitem[Wang \& Isola(2020)Wang and Isola]{wang2020uniform}
Wang, T. and Isola, P.
\newblock Understanding contrastive representation learning through alignment
  and uniformity on the hypersphere.
\newblock \emph{International Conference on Machine Learning(ICML)}, 2020.

\bibitem[Wang et~al.(2021{\natexlab{a}})Wang, Yue, Huang, Sun, and
  Zhang]{orbit_decomp}
Wang, T., Yue, Z., Huang, J., Sun, Q., and Zhang, H.
\newblock Self-supervised learning disentangled group representation as
  feature.
\newblock \emph{Advances in Neural Information Processing Systems(NeurIPS)},
  2021{\natexlab{a}}.

\bibitem[Wang et~al.(2021{\natexlab{b}})Wang, Zhou, Yu, Dai, Konukoglu, and
  Gool]{cont_seg}
Wang, W., Zhou, T., Yu, F., Dai, J., Konukoglu, E., and Gool, L.~V.
\newblock Exploring cross-image pixel contrast for semantic segmentation.
\newblock \emph{International Conference of Computer Vision (ICCV)},
  2021{\natexlab{b}}.

\bibitem[Weintraub(2003)]{Weintraub}
Weintraub, S.~H.
\newblock \emph{Representation Theory of Finite Groups: Algebra and
  Arithmetic}, volume~59.
\newblock American Mathematical Society, 2003.

\bibitem[Wu \& He(2018)Wu and He]{wu2018group}
Wu, Y. and He, K.
\newblock Group normalization.
\newblock In \emph{Proceedings of the European conference on computer vision
  (ECCV)}, pp.\  3--19, 2018.

\bibitem[Yuan \& Lin(2006)Yuan and Lin]{groupLasso}
Yuan, M. and Lin, Y.
\newblock Model selection and estimation in regression with grouped variables.
\newblock \emph{Journal of the Royal Statistical Society}, 2006.

\bibitem[Zhang et~al.(2016)Zhang, Isola, and Efros]{colorful}
Zhang, R., Isola, P., and Efros, A.~A.
\newblock Colorful image colorization.
\newblock \emph{European Conference on Computer Vision(ECCV)}, 2016.

\bibitem[Zimmermann et~al.(2021)Zimmermann, Sharma, Schneider, Bethge, and
  Brendel]{invert}
Zimmermann, R.~S., Sharma, Y., Schneider, S., Bethge, M., and Brendel, W.
\newblock Contrastive learning inverts the data generating process.
\newblock \emph{Advances in Neural Information Processing Systems(NeurIPS)},
  2021.

\end{thebibliography}

\appendix

\section{Proof of Proposition \ref{thm:identifiability}}  
\label{sec:appendix_proof}

Let $\mathcal{T}=\{T\}$ be a set of augmentation transformations on $\mathcal{X}$, and 
$$h: \mathcal{X} \to \mathbb{R}^n, $$
and let 
$$W_T = {w \in \mathbb{R}^n;   h(T h^{-1}(w)) = w}$$
for each $T \in \mathcal{T}$. 
Then we consider the sigma algebra $\Omega$ generated by $\{ W_T; T \in \mathcal{T}\}$ as well as the set $\{V_j\}$ of all its minimal element in the sense of inclusion. Thus, every member of $\Omega$ and hence any intersections/unions of 
$\{ W_T; T \in \mathcal{T}\}$ and their complements can be expressed as unions of $V_j$s. 
These $V_j$ constitutes the set of invariance-frequencies respect to $\mathcal{T}$.
We recall that $h$ is invariance-adapted if, 
for each $T\in\mathcal{T}$ there is a subset of coordinates $A_T\subset\{1\ldots,n\}$ such that
\begin{align}
%h_i (T x) = h_i(x) ~~~ \textrm{for all $i \in A_T$ and $x\in\mathcal{X}$},
h_i (T h^{-1}(w)) = w_i \textrm{~~for all $i \in A_T$ and $w\in\mathcal{Z}$}
\end{align}
and that, for $j\notin A_T$,  there is some $w'\in\mathcal{Z}$ that satisfies $h_j (T h^{-1}(w')) \neq w'_j$.

% Let $\mathcal{T}=\{T\}$ be a set of augmentation transformations on $\mathcal{X}$, and 
% $$h: \mathcal{X} \to \mathbb{R}^n$$
% be a homeomorphism, which introduces a coordinate of $\mathcal{X}$. 
% Let $W_T$ denote the invariant subspace for $T\in\mathcal{T}$ through $h$, that is, the maximal subspace of $\mathbb{R}^n$ such that $h\circ T\circ h^{-1}(v)=v$ holds for any $v$ in the space.  For the augmentations $\mathcal{T}$, we can consider the $\pi$-system $\mathcal{S}$ generated by the $\{W_T\}_{T\in\mathcal{T}}$, that is, $\mathcal{S}$ is the family of subspaces which are generated by intersections any finite subsets of $\{W_T\}_{T\in \mathcal{T}}$. 
% Let $\mathcal{V}=\{V_j\}_{j=1}^m$ be a family of subspaces in $\mathbb{R}n$. 
% If $\oplus_{j=1}^m V_j = \mathbb{R}^n$ holds and any $W\in \mathcal{S}$ is expressed as the direct sum of finite elements of $\mathcal{V}$, we call $\mathcal{V}$ {\em invariance-adapted}.  
% If $\mathcal{T}$ is a finite set, the invariance-adapted subspaces is given by the minimal intersections of $\{ W_T\}_{T\in\mathcal{T}}$, where minimal should be interpreted in terms of inclusion of sets. Thus, the invariance-adapted family is uniquely determined.

\begin{proposition*}[Formal Version] 
%Let $\mathcal{T}=\{T\}$ be a set of augmentation transformations \kf{on $\mathcal{X}$}, and 
%let $W_T$ represent the invariant space of the augmentation $T$ and $\{V_j\}$ be the finite set of minimal intersections of all $W_T$s. 
%Suppose that there exists an invertible map $h$ such that 
%$$h :  \mathcal{X} \to  \oplus V_j.$$
%If the set of $\mathcal{T}$ acts transitively in the strong sense that 
%for all $k$ and $v_k, v_{-k}, v_{-
Under the above notations, let $\{V_j\}_{j=1}^m$ be the invariance-adapted subspaces for $\mathcal{T}$, and 
assume that $\mathcal{T}$ acts transitively in the strong sense, that is, 
for any $k$, there exists some $V_k$-fixing $T$ such that $T\circ h^{-1} (v_k, v_{-k}) = h^{-1} (v_k, v_{-k}')$ for any $v_k\in V_k$ and $v_{-k}, v_{-k}'\in V_{-k}:=\bigoplus_{j\neq k}V_j$. Then,  
$\{V_j\}$ are block identifiable, that is, if there is another  $\tilde h$ with 
$\{\tilde V_j\}$ that satisfy the assumptions, then there exists an invertible map between $V_j$ and $\tilde{V}_j$ for each $j$ after appropriate reordering of $\{\tilde{V}_j\}$. 
\end{proposition*}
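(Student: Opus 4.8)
The plan is to establish block-identifiability by composing the two invariance-adapted representations and analyzing the structure of the resulting map $\phi := \tilde h \circ h^{-1}: \mathcal{Z} \to \tilde{\mathcal{Z}}$, which is an invertible map between latent spaces. The key observation is that $\phi$ must intertwine the induced actions: for each $T \in \mathcal{T}$, the action $\hat T = h \circ T \circ h^{-1}$ on $\mathcal{Z}$ and $\tilde T = \tilde h \circ T \circ \tilde h^{-1}$ on $\tilde{\mathcal{Z}}$ satisfy $\phi \circ \hat T = \tilde T \circ \phi$. Because $h$ is invariance-adapted, $\hat T$ fixes exactly the coordinates $A_T$ (i.e. $W_T = \mathbb{R}^{A_T}$) and genuinely moves each coordinate in $A_T^c$; similarly for $\tilde h$. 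The first step is to translate the strong transitivity assumption into a usable statement: for each invariance-frequency $V_k$, there is a $T$ fixing $V_k$ whose action on the complement $V_{-k}$ is ``maximally mixing'' in the sense that $\hat T(v_k, v_{-k})$ realizes every value of the $V_{-k}$-block as $v_{-k}$ ranges over $V_{-k}$, independently of $v_k$.

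Next I would use the intertwining relation together with strong transitivity to pin down which coordinates of $\tilde{\mathcal{Z}}$ the image $\phi(V_k)$ can depend on. Concretely, fix $k$ and pick the $V_k$-fixing transformation $T$ guaranteed by the assumption. Since $\hat T$ fixes every point of $V_k$ (extended arbitrarily in the complement once we quotient out the mixing) and $\tilde T$ fixes exactly $\tilde W_T = \bigoplus_{j \in S} \tilde V_j$ for some index set $S$, the intertwining relation forces $\phi(V_k)$ to lie in the subspace of $\tilde{\mathcal{Z}}$ on which $\tilde T$ acts as the identity — otherwise applying $\tilde T$ to a point of $\phi(V_k)$ and pulling back through $\phi^{-1}$ would move a point that $\hat T$ fixes, a contradiction. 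Doing this for the transformation associated with each $V_k$ and combining the resulting subspace containments shows that $\phi$, when restricted appropriately, respects the invariance-frequency decomposition: $\phi(V_k)$ is contained in a union of the $\tilde V_j$'s that ``respond the same way'' to $\mathcal{T}$. A symmetric argument applied to $\phi^{-1}$ gives the reverse containment, and since the $\{V_j\}$ and $\{\tilde V_j\}$ are the \emph{minimal} intersections (invariance-frequencies) with matching invariance spectra, a counting/minimality argument yields a bijection between the two families so that $\phi$ maps $V_k$ invertibly onto the corresponding $\tilde V_{\sigma(k)}$.

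The main obstacle I expect is the step that converts ``$\phi$ maps $V_k$ into a union of $\tilde V_j$'s'' into ``$\phi$ maps $V_k$ onto a single $\tilde V_{\sigma(k)}$'': a priori $\phi$ could scramble a frequency across several $\tilde V_j$'s that share the same invariance spectrum, or split one $\tilde V_j$ among several $V_k$'s. This is precisely where strong transitivity (rather than mere transitivity) must do the work — it ensures that within each fixed frequency the available transformations act with no further invariant sub-structure, so that the only way the intertwining relation can be satisfied consistently for all $T \in \mathcal{T}$ simultaneously is a block-wise correspondence. I would handle this by showing that if $\phi(V_k)$ had a nontrivial projection onto two distinct $\tilde V_j, \tilde V_{j'}$, then the $\hat T$-orbit structure on $V_k \oplus V_{-k}$ (which is trivial on $V_k$ by the choice of $T$, and fully mixing on $V_{-k}$) would be incompatible with the corresponding $\tilde T$-orbit structure under $\phi$; the dimension count then forces equality $\dim V_k = \dim \tilde V_{\sigma(k)}$ and surjectivity. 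A secondary technical point, worth a remark rather than a full argument, is that $\phi$ need not be linear here (unlike in Proposition~\ref{thm:l0}), so all of the above must be phrased in terms of set-level/coordinate-level invariance of $h^{-1}$ under $T$, not in terms of linear invariant subspaces — which is exactly the distinction the footnote in the main text flags, and it is why the transitivity hypothesis is stated in the strong pointwise form $T \circ h^{-1}(v_k, v_{-k}) = h^{-1}(v_k, v_{-k}')$.
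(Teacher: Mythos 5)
Your high-level plan---compose the two representations into $\phi=\tilde h\circ h^{-1}$, use the intertwining $\phi\circ\hat T=\tilde T\circ\phi$, and feed in the strong transitivity one frequency at a time---is the same as the paper's. But the key step is executed on the wrong object, and this is what creates the ``main obstacle'' you then cannot resolve. You assert that ``$\hat T$ fixes every point of $V_k$'' and reason about the set image $\phi(V_k)$; neither is right. Under strong transitivity $\hat T(v_k,v_{-k})=(v_k,v_{-k}')$, so $\hat T$ preserves the $V_k$-\emph{coordinate} but moves the point, and for a nonlinear $\phi$ the set $\phi(V_k)$ (with the complement frozen at some value) is not the object the conclusion is about. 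The correct object is the component map $\phi_k:=\tilde\pi_k\circ\phi$, and the claim to prove is that $\phi_k(v_k,v_{-k})$ does not depend on $v_{-k}$. This follows in a few lines: after matching the two families by their invariance spectra (which you do mention), the $V_k$-fixing $T$ is also $\tilde V_k$-fixing, so $\tilde\pi_k\circ\tilde h\circ T=\tilde\pi_k\circ\tilde h$; combined with $T\circ h^{-1}(v_k,v_{-k})=h^{-1}(v_k,v_{-k}')$ this gives $\phi_k(v_k,v_{-k})=\tilde\pi_k\circ\tilde h\circ T\circ h^{-1}(v_k,v_{-k})=\phi_k(v_k,v_{-k}')$ for all $v_{-k},v_{-k}'$.

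Once this is in place, the obstacle you flag---that $\phi$ could scramble a frequency across several $\tilde V_j$'s sharing the same spectrum---simply does not arise, and the orbit-structure and dimension-counting arguments you sketch (but do not carry out) are unnecessary. Doing the above for every $k$ yields $\phi(v)=\bigoplus_k\phi_k(v_k)$, and the global invertibility of $\phi$ then forces each $\phi_k:V_k\to\tilde V_k$ to be invertible, which is exactly the block identifiability claimed. As written, your proposal stops at an acknowledged gap whose proposed resolutions are too vague to verify, so it is not yet a proof; the fix is to replace the set-level, fixed-point phrasing with the coordinate-projection phrasing above.
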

\begin{proof}
Let $v$ denote an element of 
$\mathbb{R}^n=\bigoplus V_i$, and let $v_{-i}\in \bigoplus_{j\neq i}V_j$ denote a vector with the $i$-th component removed from $v$. 
Note that $\phi = \tilde{h}\circ h^{-1} : \bigoplus_i V_i \to \bigoplus_i \tilde{V}_i$ is an invertible map. 
By the last remark before Proposition, there is a one-to-one correspondence between $\{V_j\}$ and $\{\tilde{V}_j\}$, so we assume w.l.o.g.~that for each $k$, $V_k$ and $\tilde V_k$ have the same set of $T$s to which they are invariant.  
Now, write
$$\phi (v)  = [\phi_k(v), \phi_{-k}(v)]$$ 
where $\phi_k$ is the $\tilde V_k$ component of $\phi$, or $\tilde \pi_k \circ  \phi$ with the projection operator $\tilde \pi_k:\bigoplus_{j=1}^m \tilde V_j \to \tilde V_k$. 
We first show that $\phi_k(v)$ depends only on $v_k = \pi_k (v)$, 
that is, 
$$\phi_k(v_k , v_{-k}') = \phi_k(v_k , v_{-k})$$
for all $v_k\in V_k$ and $v_{-k}, v_{-k}'\in V_{-k}$. 
By the assumption, there exists a $V_k$-fixing $T$ such that $T\circ h^{-1} (v_k, v_{-k}) = h^{-1} (v_k, v_{-k}')$ for all $v_k, v_{-k}$ and $v_{-k}'$. 
Because $T$ is $V_k$-fixing, it is also $\tilde{V}_k$-fixing. %by the definition of $V_k$ and $\tilde{V}_k$.  
Thus, 
$$\tilde{\pi}_k \circ  \tilde{h} \circ  T  \circ  \tilde{h}^{-1} = \tilde \pi_k \circ  T  \circ  \tilde h^{-1},$$
or we can say $\tilde{\pi}_k \circ \tilde{h}\circ T = \pi_k \circ \tilde{h}$.
Putting all together, we have 
\begin{align*}
    \tilde{\pi}_k \circ \phi(v_k, v_{-k}) &=  \tilde{\pi}_k \circ\tilde{h}\circ h^{-1}(v_k, v_{-k}) \\
    &= \tilde{\pi}_k\circ \tilde{h}\circ T \circ h^{-1}(v_k, v_{-k}) \\
    &= \tilde{\pi}_k\circ \tilde{h}\circ h^{-1}(v_k, v_{-k}')  \\
    &= \tilde{\pi}_k\circ \phi(v_k, v_{-k}'), 
\end{align*}
which shows that the map $\phi_k$ depends only on $v_k$.
Write $\phi_k(v_k):=\phi_k(v)$, then 
$$\phi(v) = \bigoplus_{k=1}^m \phi_k(v_k).$$ 
Since $\phi$ is invertible, the above relation guarantees that each $\phi_k$ is invertible, and the identifiability follows.

% Let us define $\hat T = \tilde{h}^{-1} T \tilde{h}$ so that 
% $$\tilde{h} \hat T  (v_k, v_{-k}) = \tilde{h} (v_k, v_{-k}')$$ 
% This in particular means that 
% $\tilde{\pi}_k \tilde{h} \hat T = \tilde{\pi}_k \tilde{h}$ or $\tilde{\pi}_k \tilde{h} \hat T \tilde{h}^{-k}$. 

\end{proof}

\section{Proof of Proposition \ref{thm:l0}} 

We introduce several definitions and make several observations before proving the claim.
If $J \subset \{1,2, ..., d\}$ is a subset of coordinates, 
let $M_J \subset \mathbb{R}^{d \times d}$ be the set of matrices defined as 
$$M_J = \{[v_1, ..., v_d]; v_k \in \mathbb{R}^d, v_k = 0 ~~\forall~~ k\in J\}.$$ 
Note that $M_J$ is an \textit{ideal} in the algebra of $\mathbb{R}^{d \times d}$, that is, for any $m \in  \mathbb{R}^{d \times d}$ and $ r \in M_J$, 
$mr \in M_J$. 
Now, also define the set of matrices 
$$H_J = I + M_J = \{I + m; m\in M_J\}$$
By definition,   the kernel of any linear map $w \in H_J$ equals $span(\{e_k; k \in J\})$ almost everywhere, because it is equivalent to requiring  $ w e_k  = w_k = 0$ for all $k \in J$. 
\begin{figure}
    \centering
    \includegraphics[scale= 0.4]{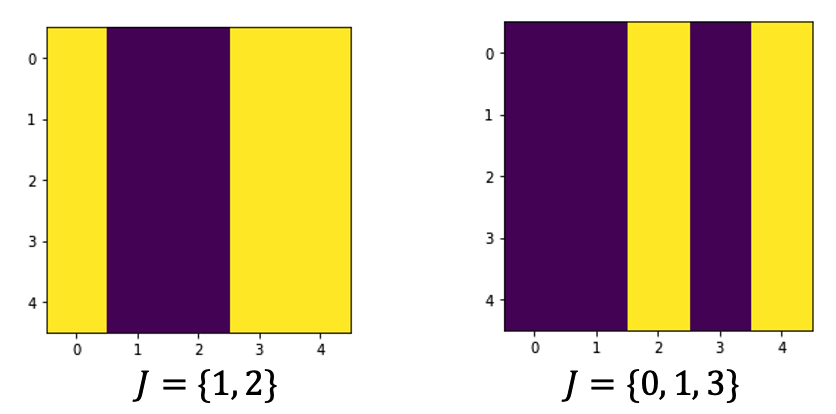}
    \caption{Visualizations of $M_J$.}
    \label{fig:my_label}
\end{figure}

\begin{figure}
    \centering
    \includegraphics[scale= 0.4]{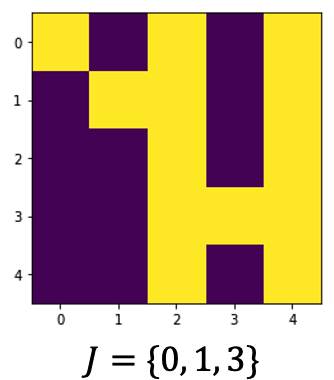}
    \caption{Visualizations of $H_J$}
    \label{fig:my_label}
\end{figure}
Thus, in the linear case, by assuming the existence of invariance-adapted latent space, we are practically assuming the existence of some basis $P^*$ under which $[T]_{P^*} \in H_{A_{T}}$ for each $T \in \mathcal{T}$, where  $A_{T} \subset \{1,2, ..., d\}$. 

 \begin{proposition*}
Suppose that the augmentations are linear transformations on a linear space and let $[v]_{P}$ denote the representation of a vector $v$ under the basis $P$.
 If each $W_T$ is a subset of coordinates under $P^*$ for some $P^*$,
 then $P^*$ can be identified upto mixing within the invariance-frequencies
 by minimizing
 $$L(P) = \sum_{T \in \mathcal{T}} \max_v \| ([T^t]_P - I)[v]_p  \|_0.$$
 \end{proposition*}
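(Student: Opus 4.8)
The plan is to convert the inner $\max_v$ into a combinatorial count of coordinate directions fixed by $T$, read the set of minimizers off that count, and then recover $P^*$ by a M\"obius‑inversion argument on the lattice of subsets of $\mathcal{T}$.

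\textbf{Step 1: linearise the inner maximum.} Write $p_1,\dots,p_d$ for the columns of $P$. Since $[v]_P=P^{-1}v$ sweeps out all of $\mathbb{R}^d$ as $v$ does, and since for any matrix $M$ the coordinate $(Mu)_i$ is nonzero for all $u$ outside one hyperplane whenever row $i$ of $M$ is nonzero, the inner maximum equals the number of nonzero rows of $([T]_P-I)^t$, i.e.\ (this is exactly what the transpose in $[T^t]_P$ buys us) the number of nonzero \emph{columns} of $[T]_P-I$, the maximum being attained for every $v$ off a finite union of hyperplanes (the source of the ``a.e.'' in the surrounding text). Column $k$ of $[T]_P-I=P^{-1}TP-I$ vanishes precisely when $Tp_k=p_k$, i.e.\ when $p_k\in W_T$, so this number is $d-a_T(P)$ with $a_T(P):=\#\{k:p_k\in W_T\}$.

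\textbf{Step 2: characterise the minimizers.} The $p_k$ lying in $W_T$ are linearly independent and contained in $W_T$, so $a_T(P)\le\dim W_T$, with equality iff they span $W_T$, i.e.\ iff $W_T$ is a coordinate subspace under $P$. Hence $L(P)=\sum_{T\in\mathcal{T}}(d-a_T(P))\ge\sum_{T\in\mathcal{T}}(d-\dim W_T)$, a constant, and equality holds exactly for those $P$ under which every $W_T$ is a coordinate subspace; by hypothesis $P^*$ is such a $P$, so it minimises $L$ and the minimiser set is precisely the set of ``simultaneously‑coordinatising'' bases.

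\textbf{Step 3: identifiability, and the hard part.} Take any minimiser $P$ and assign to each column its invariance set $I_P(k)=\{T\in\mathcal{T}:p_k\in W_T\}$. Expanding a vector of $\bigcap_{T\in\mathcal{S}}W_T$ in the basis $\{p_k\}$ and using that each $W_T$ is a coordinate subspace gives $\bigcap_{T\in\mathcal{S}}W_T=\mathrm{span}\{p_k:I_P(k)\supseteq\mathcal{S}\}$ for every $\mathcal{S}\subseteq\mathcal{T}$, hence $\dim\bigcap_{T\in\mathcal{S}}W_T=\#\{k:I_P(k)\supseteq\mathcal{S}\}$; the left‑hand side is basis‑independent, so M\"obius inversion on the subset lattice of $\mathcal{T}$ forces the counts $\#\{k:I_P(k)=\mathcal{S}\}$ to be the same for $P$ and for $P^*$. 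Relabelling the columns of $P$ to match invariance sets, both bases then induce the same partition of the coordinates into invariance‑frequency index sets, and the span identity shows that the $k$‑th column of $R:=(P^*)^{-1}P$ is supported on the indices whose invariance set contains $I_{P^*}(k)$; ordering the frequency blocks by a linear extension of the inclusion order on signatures makes $R$ block upper triangular and invertible, so each of its diagonal blocks is an arbitrary invertible mixing within a single invariance‑frequency. The hard part is precisely upgrading this block‑triangular conclusion to the block‑\emph{diagonal} statement the proposition asserts: a minimiser may additionally blend a more‑invariant frequency into a less‑invariant one, so the individual $W_T^c$ (equivalently the $V_j$ themselves) are pinned down only when the poset of occurring invariance‑signatures is an antichain — e.g.\ when every augmentation moves all but one frequency, as in the linear experiment — or once the non‑mixing/block‑diagonal structure is imposed on the competing bases as well. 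The remaining work is bookkeeping: the genericity of the maximum in Step 1, and (if $\mathcal{T}$ is infinite) restricting the M\"obius argument to the finite sublattice generated by the invariance sets that actually occur, which is harmless since a simultaneously‑coordinatising basis can exist only if finitely many $W_T$ are distinct.
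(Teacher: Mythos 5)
Your Steps 1 and 2 follow essentially the same route as the paper's own proof: both reduce the inner maximum to the number of nonzero rows of $[T]_P^t-I$ (equivalently, nonzero columns of $[T]_P-I$), bound it below by $\operatorname{rank}(T-I)=d-\dim W_T$, and conclude that the minimizers of $L$ are exactly the bases under which every $W_T$ is a coordinate subspace. Your column-counting formulation $L_T(P)=d-\#\{k: p_k\in W_T\}$ is cleaner than the paper's kernel-dimension argument and makes the ``equality iff $W_T$ is coordinatized'' characterization immediate; your M\"obius-inversion bookkeeping is likewise a correct, more explicit version of the paper's $|A_T|=|\hat A_T|$ step.

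The gap you flag in Step 3 is genuine, and it is present in the paper's proof as well, not only in yours. The paper ends by observing that the change of basis maps $\mathbb{R}^{A_T}$ to $\mathbb{R}^{\hat A_T}$ for every $T$ ``and hence their intersections as well'' --- but the invariance-frequencies are minimal intersections of $\{A_T, A_T^c\}$, and nothing shown forces the complements to correspond. Concretely, take $d=2$, $\mathcal{T}=\{T\}$, $[T]_{P^*}=\operatorname{diag}(1,\lambda)$ with $\lambda\neq 1$, and $\hat P$ with columns $p_1=e_1$, $p_2=e_1+e_2$: then $[T]_{\hat P}-I$ still has exactly one nonzero column, so $\hat P$ is also a minimizer, yet it blends the frequency $\operatorname{span}(e_1)$ into $\operatorname{span}(e_2)$ --- precisely the block-triangular leakage you describe. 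Your proposed repairs are the right ones: either the occurring invariance signatures form an antichain, or the non-mixing condition assumed of $P^*$ in the main-text statement is imposed on competing minimizers too, in which case $\mathbb{R}^{A_T^c}$ is the coordinate representation of the basis-independent subspace $\operatorname{im}(T-I)$, so the complements and hence all minimal intersections are preserved and the block-diagonal conclusion follows. In short: same approach, more carefully executed, and the hole you could not close is a hole in the proposition's proof as written rather than a defect of your argument.
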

 \begin{proof}
 If for all $T \in \mathcal{T}$ we have $[T]_P \in H_{A_T}$ for some $A_T$ and if $[T]_P$ does not mix $W_{T}$ with $W_{T}^c$, then $[T]_P^t \in H_{A_T}$ as well. Thus,
 showing this claim for  $[T]_P^t$ would be equivalent to the original claim.
 This being said, recall that $|v|_0$ counts the number of non-zero entries in $v$. 
 We begin by observing that for any $P$ satisfying
 $[T]_P \in M_{A_T} + I = H_{A_T}$ for each $T$, 
 we have 
 $$\max_v \|[[T]_P^t - I ] [v]_P \|_0 = |A_T^c| $$
 To see this,  note that $\max_v \|[[T]_P^t - I ] [v]_P \|_0 \leq |A_T^c| $ trivially. Also, if $\max_v \|[[T]_P^t - I ] [v]_P \|_0 < |A_T^c|$ then $\textrm{span}(\{ [T]_P^t - I ] [v]_P ; v \in \mathbb{R}^d\} )$ would have the dimension less than $|A_T^c|$. Thus,  this strict inequality  would contradict the maximality of $|A_k|$ which is the kernel dimension of $[T_k - I]_P$ and hence of $[T_k - I]_P^t$.
 Putting 
 $$L_T(P) = max_v \| [ [T]_P^t - I ] [v]_P \|_0, $$ 
we have $\min_P L_T(P) = |A_T^c|$ as well by  the same maximality argument. In other words, $P^*$ in the statement maximizes $L_{T_k}(P)$ for each $T_k$. 

Now, let $\hat{P}$ be another minimizer of $L(P)$,
then $\min_P L_T(\hat{P}) = |A_T^c|$ necessarily by the minimality.
 We claim that this would necessiate that, for each $T$, the change of basis from
 $P^*$ to $\hat{P}$ would map each invariance-frequency to another invariance-frequency of same dimension. 
To see this, let $v^*$ be the vector such that $\| [ [T]_{\hat{P}}^t - I ] [v^*]_{\hat{P}} \|_0 = |A_T^c| $. 
By the maximality of $v$, 
$\| [ [T]_{\hat{P}}^t - I ] [w]_{\hat{P}} \|_0 \leq |A_T^c| $ for all $w$ not in the span of $v$. 
In particular, 
$\| [ [T]_{\hat{P}}^t - I ] e_j \|_0 \leq |A_T^c| $  for all $j$. 
Now, put 
$$\{\ell ; [ [T]_{\hat{P}}^t - I ] e_k ]_\ell \neq 0\} = B_j.$$ 
If $B = \cup B_j$ satisfies $|B| > |A_k^c|$ then there would be some $\{c_j\}$ such that 
\begin{align}
    \left \| [ [T]_{\hat{P}}^t - I ] \left( \sum_j c_j e_j \right)  \right \|_0 = |B| > |A_T^c|
\end{align}
contradicting the maximality of $|A_k^c|$ with respect to $v^*$ above. Therefore $|B| \leq |A_T^c|$ necessarily.
At the same time, if $|B| < |A_T^c|$, this would imply that the number of nonzero rows of $[T]_{\hat{P}}^t$ is greater than $|A_T^c|$ so that $L_T(\hat P) < |A_T^c|$, contradicting the assumed  minimality of $\hat P$.  We therefore have $|B| = |A_T^c|$, and this means that 
the number of nonzero-rows of $L_T(\hat P)$ is exactly $A_T^c$.  

We have thus shown that, for all $T \in \mathcal{T}$, whenever $[T]_{P^*} \in H_{A_T}$, we have 
$[T]_{\hat{P}} \in H_{\hat{A}_T}$ with $|A_T| = |\hat{A}_T|$. because $[T-I]_{P^*} \to  [T-I]_{\hat{P}}$ is a change of basis transformation via a linear invertible map, this change of basis  
 maps $\mathbb{R}^{A_T} \to \mathbb{R}^{A_{\hat{A}_T}}$ invertibly and hence their intersections as well. 
 \end{proof}

\section{Additional Figures}
\begin{figure}[ht!]
    \centering
    \includegraphics[scale=0.2]{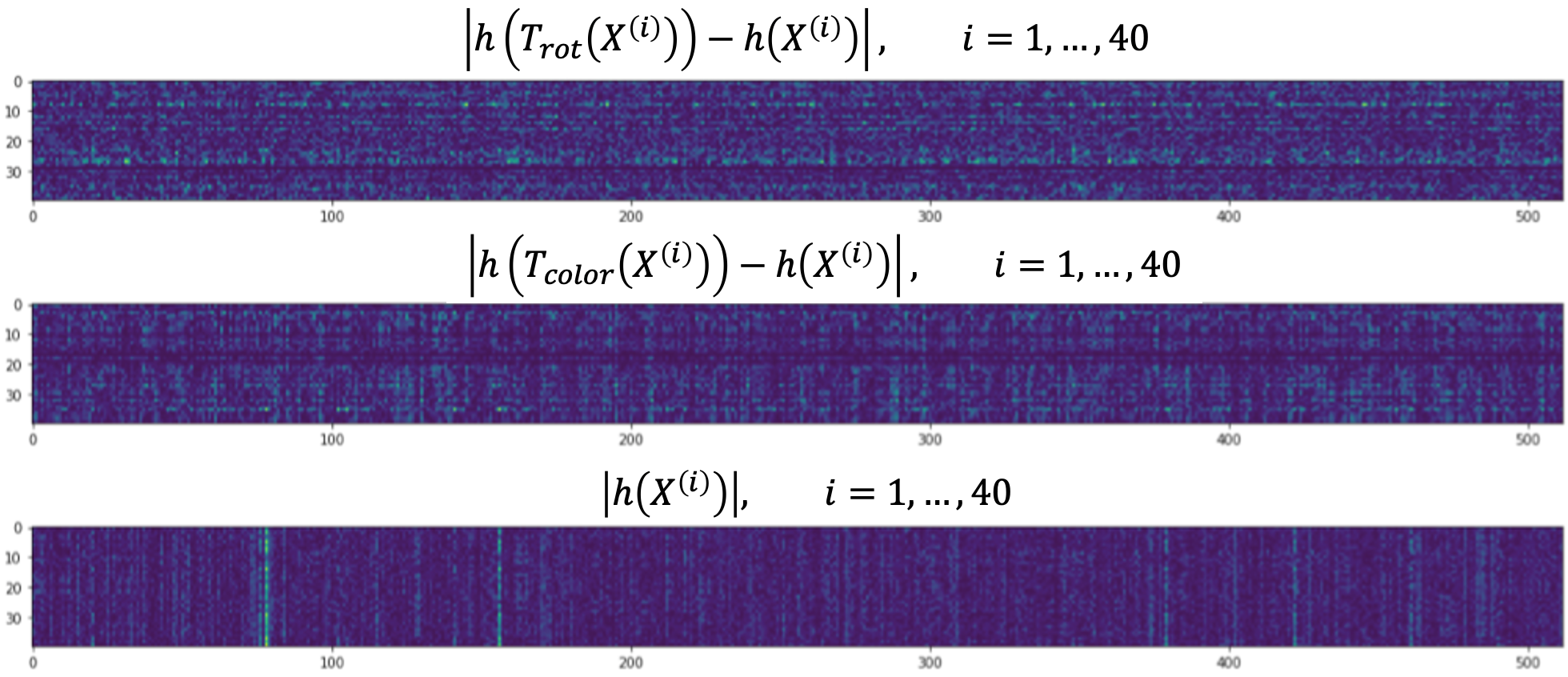}
    \caption{Visualizations of SimCLR-learned representation. First row: stacked 512 dimensional vectors representing the flattened $| h_\theta(T_{rot}(X)) - h_\theta(X) | $ for random rotations $T_{rot}$. Second row: stacked 512 dimensional vectors representing $|h_\theta(T_{color}(X)) - h_\theta(X)| $ for random color rotations $T_{color}$. 
    Third row:  Second row: stacked 512 dimensional vectors representing $|h_\theta(X)|$. No apparent structure is visible in simCLR representation.}
    \label{fig:delta_simCLR}
\end{figure}

\begin{figure}[ht!]
    \centering
    \includegraphics[scale=0.3]{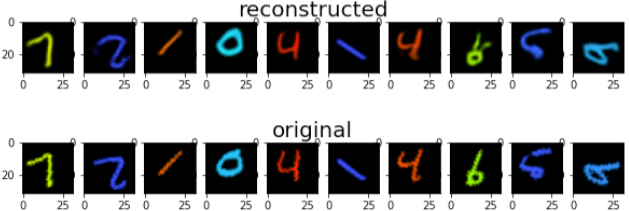}
    \caption{We trained an decoder for the fixed encoder trained with our lasso contrastive loss. We see that our encoder with decomposed feature dimensions retains much information about the digit shape, orientation and color.} 
        \label{fig:img_reconstr}
\end{figure}

\begin{figure}
    \centering
    \includegraphics[scale = 0.3]{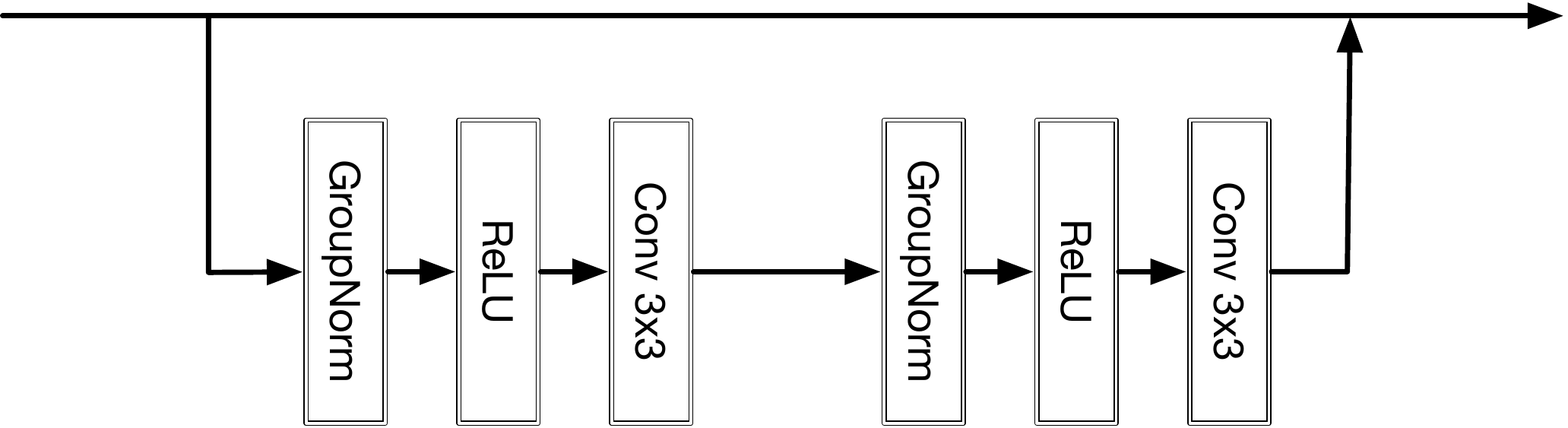}
    \caption{The detail of the ResBlock architecture. For the upsampling and downsampling, we followed the same procedure in ~\cite{miyato2018cgans}. For downsampling, we replaced the identical mapping with 1x1 convolution followed by downsampling layer (mean avegrage pooling). For upsampling, we also replaced the identical mapping with nearest-neighbor upsampling followed by 1x1 convolution. The number of groups for the group normalization layer was set to 32. Weight standarization~\citep{weightstandardization} was applied to each 3x3 convolution layer.}
    \label{fig:resblock}
\end{figure}

\section{The architecture used in Styled MNIST experiment }\label{sec:arch}
 In the experiment in Sec.\ref{gen_inst}, we adopted ResNet~\citep{he2016deep} for the encoder and decoder architecture. We used ReLU function~\citep{nair2010rectified, glorot2011deep, maas2013rectifier} for each activation function and the group normalization~\citep{wu2018group} for the normalization layer. The details of the architecture is found in Figures~\ref{fig:autoenc} and \ref{fig:resblock}.
\begin{figure}[ht]
\center
\includegraphics[scale=0.2]{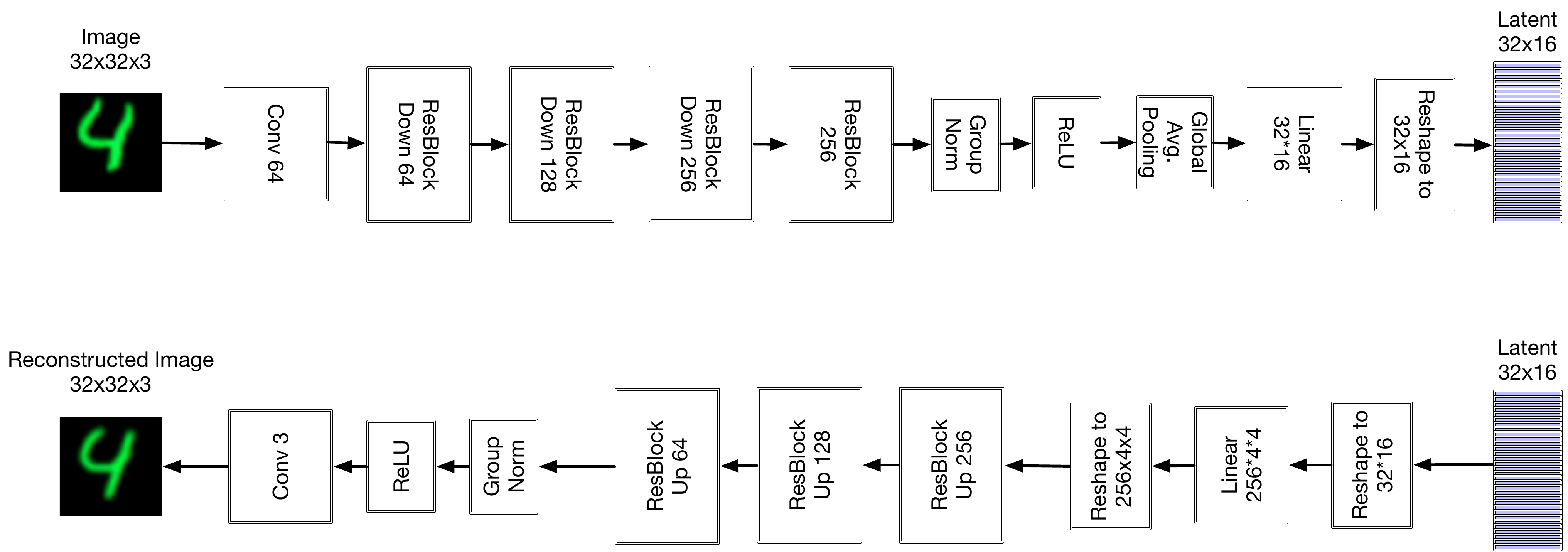}
\caption{The encoder~(top) and decoder (bottom) architecture we used in our experiments. The detail of the ResBlock architecture is described in Fig.\ref{fig:resblock}. The kernel size of all of the convolution layers is set to 3x3 except the layers replacing the skip connection in the ResBlocks, which is set to 1x1. Each number following each layer name indicates the output dimension for the linear layer and output channels for the convolution layer.} 
\label{fig:autoenc}
\end{figure}

\section{More details of the experiment in section \ref{sec:col_mnist}}
\label{sec:colormnist_detail}
In this experiment, we created our stylized mnist dataset $\mathcal{X}$ by resizing each member of mnist dataset to $32 \times 32$, applying a random rotation with uniform angle in the range $[0, 2\pi)$ and coloring it with $hsv(\alpha, 1, 1)$ with $\alpha$ sampled uniformly over $[0,1]$. 
As an augmentation, we used torchvision transformation of random rotation over $[0, \pi)$ and random color-jittering with hue parameter $0.5$ and brightness $0.5$. 
We trained the encoder with the architecture described in Figure \ref{fig:autoenc} based on the SimCLR type contrastive objective
\begin{align}
    \mathbb{E}_{T, X}\left[ log \left( \frac{\exp(-d(h_\theta(T(X)), h_\theta(X))/\tau)}{ \mathbb{E}_{X'}[\exp(-d(h_\theta(T(X)), h_\theta(X'))/\tau)] }\right) \right] 
\end{align}
with $d(z^{(1)}, z^{(2)}) =  \sum_{k \in 0:16} \|z_{k}^{(1)} - z_{k}^{(2)} \|_2$, where $z_{k}^{(i)} \in \mathbb{R}^{32}$ represents the $k$th row of the tensor $z^{(i)}$.
This is an approximation of the objecitve \eqref{eq:group_lasso_loss} 
since the denominator of SimCLR approximates $H(h_\theta(T(X)))$ \cite{wang2020uniform}. 
We set our $\tau = 0.001$, and  trained the model over 250 epochs using Adam with default torch setting.

For the evaluation of the performance of downstream tasks,
we trained linear regression models whose inputs are $512$ dimensional vectors produced by flattenning the $32 \times 16$ dimensional latent tensors.
For digit classification, we used linear logistic regression.
For the angle prediction, we first created a labeled dataset consisting of triplet 
$$( T_{rot(\eta_1)}(x) ,  T_{rot(\eta_2)}(x),  \sin(\eta_1 - \eta_2) )$$
where $T_{rot(\eta)}$ is a clockwise rotation of angle $\eta$, 
and made the model predict $\sin(\eta_1 - \eta_2)$ from
$[h_\theta(T_{rot(\eta_1)}(x)),  h_\theta(T_{rot(\eta_2)}(x))]$.
For the color prediction, we made the model predict the color of $x$ from $h_\theta(x)$. 
For all the training of linear models, we used scipy package with default setting.

% \bibliographystyle{icml2022}
% \bibliography{references}

\end{document}